\documentclass[10pt,twocolumn]{article}

\usepackage[T1]{fontenc}
\usepackage[utf8]{inputenc}
\usepackage{microtype}
\usepackage{graphicx}
\usepackage{subfigure}
\usepackage{booktabs}
\graphicspath{{figures/}} 
\usepackage{mathtools}
\usepackage{bm}
\usepackage{subcaption}
\usepackage{tabularx}
\usepackage{adjustbox}
\usepackage{enumitem}
\setlist{nosep,leftmargin=*}
\usepackage{xcolor}
\usepackage{hyperref}
\usepackage[accepted]{icml2025}
\usepackage{amsmath}
\usepackage{amssymb}
\usepackage{mathtools}
\usepackage{amsthm}
\usepackage[capitalize,noabbrev]{cleveref}

\usepackage{tikz}
\usetikzlibrary{shapes.geometric,arrows,positioning,matrix,arrows.meta}
\tikzset{
  startstop/.style = {rectangle, rounded corners, minimum width=3cm, minimum height=1cm, text centered, draw=black, fill=red!30},
  process/.style = {rectangle, minimum width=3cm, minimum height=1cm, text centered, draw=black, fill=blue!30},
  arrow/.style = {thick,->,>=stealth},
  decision/.style = {diamond, minimum width=3cm, minimum height=1cm, text centered, draw=black, fill=green!30},
  state/.style = {circle, draw, minimum size=25pt, inner sep=0pt}
}

\theoremstyle{plain}
\newtheorem{theorem}{Theorem}[section]

\newtheorem{lemma}[theorem]{Lemma}
\newtheorem{corollary}[theorem]{Corollary}
\theoremstyle{definition}

\theoremstyle{remark}

\usepackage[textsize=tiny]{todonotes}

\icmltitlerunning{Mutual Information Collapse Explains Disentanglement Failure in $\beta$-VAEs}

\begin{document}

\twocolumn[
\icmltitle{Mutual Information Collapse Explains\\Disentanglement Failure in $\beta$-VAEs}

\begin{icmlauthorlist}
\icmlauthor{Minh Vu}{y1}
\icmlauthor{Xiaoliang Wan}{y2}
\icmlauthor{Shuangqing Wei}{sch}
\end{icmlauthorlist}

\icmlaffiliation{y1}{Department of Mathematics, Miami University, Oxford, OH, USA}
\icmlaffiliation{y2}{Department of Mathematics, Louisiana State University, Baton Rouge, LA, USA}
\icmlaffiliation{sch}{Division of Electrical \& Computer Engineering, Louisiana State University, Baton Rouge, LA, USA}

\icmlcorrespondingauthor{Minh Vu}{vumh2@miamioh.edu}
\icmlcorrespondingauthor{Xiaoliang Wan}{xlwan@lsu.edu}
\icmlcorrespondingauthor{Shuangqing Wei}{swei@lsu.edu}

\icmlkeywords{Machine Learning}

\vskip 0.3in
]

\printAffiliationsAndNotice{}

\begin{abstract}
The $\beta$-VAE is a foundational framework for unsupervised disentanglement, using $\beta$ to regulate the trade-off between latent factorization and reconstruction fidelity. Empirically, however, disentanglement performance exhibits a pervasive non-monotonic trend: benchmarks such as MIG and SAP typically peak at intermediate $\beta$ and collapse as regularization increases. We demonstrate that this collapse is a fundamental information-theoretic failure, where strong Kullback--Leibler pressure promotes marginal independence at the expense of the latent channel's semantic informativeness. By formalizing this mechanism in a linear-Gaussian setting, we prove that for $\beta > 1$, stationarity-induced dynamics trigger a spectral contraction of the encoder gain, driving latent-factor mutual information to zero. To resolve this, we introduce the $\lambda\beta$-VAE, which decouples regularization pressure from informational collapse via an auxiliary $L_2$ reconstruction penalty $\lambda$. Extensive experiments on dSprites, Shapes3D, and MPI3D-real confirm that $\lambda > 0$ stabilizes disentanglement and restores latent informativeness over a significantly broader range of $\beta$, providing a principled theoretical justification for dual-parameter regularization in variational inference backbones.
\end{abstract}

\section{Introduction}
\label{sec:introduction}

Disentangled representation learning seeks to map high-dimensional observations into latent variables whose coordinates correspond to distinct, independent generative factors \cite{bengio2014representationlearningreviewnew, wang2024disentangledrepresentationlearning}. This decomposition is foundational for interpretability in scientific domains--such as genomics and physics--where latent dimensions must capture specific physical or biological mechanisms \cite{osti_1875308, cropsal2025compressingbiologyevaluatingstable}. Variational autoencoders (VAEs) provide the standard framework for this task \cite{kingma2013vae}, with the $\beta$-VAE \cite{higgins2017betavae} remaining a canonical baseline. By upweighting the Kullback--Leibler (KL) divergence in the evidence lower bound (ELBO), the $\beta$-VAE enforces the latent factorization essential for unsupervised disentanglement.

However, a persistent empirical failure mode limits this approach: \textit{high-$\beta$ collapse}. In practice, disentanglement performance--quantified by metrics such as the information-theoretic MIG \cite{chen2018isolating} and the predictor-based SAP \cite{kumar2018dipvae}--often peaks prematurely and deteriorates as regularization increases. While previous studies established that latent inference quality is non-monotonic in $\beta$, this degradation is usually loosely attributed to excessive regularization or a generic loss of reconstruction fidelity \cite{fil2021betavaereproducibilitychallengesextensions, ichikawa2024learning, ichikawa2025high}. A formal mechanistic account of why these specific evaluation scores mathematically vanish--even as the objective continues to prioritize latent independence--remains elusive.

In this paper, we identify a fundamental principle: the validity of these metrics is functionally dependent on the informativeness of the latent channel. In the linear-Gaussian regime, both MIG and SAP serve as estimators of the shared information between latents and generative factors. We argue that if the latent channel carries vanishing mutual information, these metrics become inherently \textit{degenerate}, losing the ability to distinguish between a factorized representation and an uninformative one. We characterize high-$\beta$ degradation as an \textit{informational collapse} phenomenon, where extreme KL pressure decouples the representation from the semantic information it was designed to organize.

To formalize this, we bridge recent high-dimensional asymptotic analyses of VAE dynamics \cite{ichikawa2024learning, ichikawa2025high} with a tractable linear-Gaussian framework. We demonstrate that for $\beta > 1$, the stationary points of the $\beta$-VAE objective trigger a \textit{spectral contraction} of the encoder gain, driving latent mutual information toward zero \cite{spectral_contration}. Motivated by this diagnosis, we analyze the $\lambda\beta$-VAE \cite{vu2025phd}, an objective that utilizes an auxiliary $L_2$ reconstruction penalty to decouple disentangling pressure from informational loss. Unlike non-stationary heuristics, the $\lambda$ parameter modifies the encoder stationarity conditions to explicitly counteract spectral contraction, significantly widening the stable operating regime for informative representations.

\subsection*{Summary of Contributions}

\begin{itemize}[leftmargin=*]
    \item \textbf{Analytical Diagnosis of Metric Degeneracy:} We prove that MIG and SAP share an implicit structural dependence on latent informativeness. We demonstrate that as information collapses, these metrics become mathematically degenerate, explaining the ``failed'' disentanglement scores observed in high-regularization regimes \cite{fil2021betavaereproducibilitychallengesextensions, carbonneau2022measuringdisentanglementreviewmetrics}.

    \item \textbf{Mechanistic Account of Informational Collapse:} Within a linear-Gaussian framework, we prove that for $\beta > 1$, stationarity-induced dynamics trigger a spectral contraction of the encoder gain. This establishes a \textit{low-information equilibrium} where the latent representation becomes asymptotically uninformative about generative factors, providing a rigorous basis for the phase-transition dynamics of informational collapse.
    
    \item \textbf{Principled Remedy via the $\lambda\beta$-VAE:} We characterize the $\lambda\beta$-VAE objective as a mechanism to decouple disentangling pressure from informational loss. We show that the $\lambda$ parameter introduces a structural intervention that modifies stationarity conditions to counteract spectral contraction, preserving latent signal under high regularization.
    
    \item \textbf{Empirical Validation:} We validate our framework on dSprites \cite{dsprites17}, Shapes3D \cite{3dshapes18}, and MPI3D-real \cite{gondal2019transferinductivebiassimulation}. Results confirm that $\lambda > 0$ stabilizes disentanglement metrics and reconstruction fidelity over a substantially broader operating regime than the standard $\beta$-VAE.
\end{itemize}

\section{Related Work}
\label{sec:related_work}

\paragraph{VAE-based Disentanglement.}
The $\beta$-VAE \cite{higgins2017betavae} established the principle that upweighting the KL divergence promotes latent factorization by constraining informational capacity. Refinements such as FactorVAE \cite{kim2018factorvae} and $\beta$-TCVAE \cite{chen2018isolating} specifically target total correlation, while DIP-VAE \cite{kumar2018dipvae} uses moment-matching to encourage an isotropic aggregated posterior. These methods operate on the philosophy that disentanglement is a property of increased regularization pressure. However, this narrative often overlooks the trade-off where the pressure intended to factorize the latent space simultaneously destroys the informativeness required for evaluation.

\paragraph{Theoretical Limits and Phase Transitions.}
Unsupervised disentanglement is fundamentally non-identifiable without explicit inductive biases or specific data assumptions \cite{locatello2019challengingcommonassumptionsunsupervised}. This structural fragility is amplified in the $\beta$-VAE, where increasing regularization beyond $\beta > 1$ forces a prioritization of latent independence that often necessitates the sacrifice of reconstruction fidelity \cite{mathieu2019disentangling}. Recent high-dimensional asymptotic analyses by \citet{ichikawa2024learning} and \citet{ichikawa2025high} characterize this behavior as an inevitable phase transition into posterior collapse once the KL weight exceeds a critical threshold. While \citet{wang2022posterior} attributed this collapse to a competition between likelihood and prior regularization in linear models, the literature still lacks a mechanistic account of how these stationary points trigger the mathematical vanishing of the specific informativeness-dependent scores used to evaluate disentanglement.

\paragraph{Metric Informativeness and Degeneracy.}
Evaluation typically relies on metrics quantifying statistical coupling, such as MIG \cite{chen2018isolating}, SAP \cite{kumar2018dipvae}, and DCI \cite{eastwood2018dci}. These metrics share a structural dependence on the informativeness of the latent channel \cite{abdi2019preliminarystudydisentanglementinsights, carbonneau2022measuringdisentanglementreviewmetrics}. While previous work viewed this through information bottleneck theory \cite{burgess2018understanding}, the explicit link between the spectral contraction of the encoder gain and the resulting degeneracy of these scoring mechanisms is largely unexplored. We argue that as information collapses, these metrics become mathematically degenerate, losing the ability to distinguish factorization from uninformative noise.

\paragraph{Information Recovery in Generative Backbones.}
VAEs serve as foundational backbones for modern hybrid architectures, most notably Latent Diffusion Models \cite{ho2020ddpm, rombach2022ldm}, where a highly informative latent space is essential for effective downstream controllability. Recent studies have highlighted that the heavy regularization required for latent factorization often strips the representations of critical factor-relevant information, compromising performance in high-stakes domains such as phenotypic drug discovery \cite{cropsal2025compressingbiologyevaluatingstable}. Consequently, investigating mechanisms that preserve informational capacity without sacrificing disentangling pressure has become a critical necessity for the stability and utility of modern representation learning backbones.

\section{Linear-Gaussian Framework}
\label{sec:linear_gaussian}

We utilize a linear-Gaussian framework to provide a mechanistic account of high-$\beta$ degradation. While linear-Gaussian VAEs are established diagnostic tools \cite{mathieu2019disentangling, ichikawa2025high}, we leverage their tractability to formalize the informational collapse hypothesized in Section~\ref{sec:introduction}.

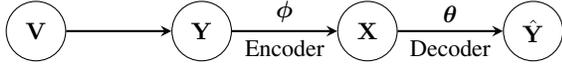
\begin{figure}[htb]
\centering
\begin{tikzpicture}[
  node distance=1.4cm,
  auto,
  every node/.style={scale=0.9},
  transform shape
]
  \node[state] (V) {$\mathbf{V}$};
  \node[state, right=of V] (Y) {$\mathbf{Y}$};
  \node[state, right=of Y] (X) {$\mathbf{X}$};
  \node[state, right=of X] (Yhat) {$\hat{\mathbf{Y}}$};

  \draw[arrow] (V) -- (Y);
  \draw[arrow] (Y) -- node[above] {$\boldsymbol{\phi}$} node[below] {Encoder} (X);
  \draw[arrow] (X) -- node[above] {$\boldsymbol{\theta}$} node[below] {Decoder} (Yhat);
\end{tikzpicture}
\caption{Linear-Gaussian VAE: Ground-truth factors $\mathbf{V}$ generate observations $\mathbf{Y}$; the encoder maps $\mathbf{Y}\mapsto \mathbf{X}$; the decoder yields reconstructions $\hat{\mathbf{Y}}$.}
\label{markov_chain}
\end{figure}

\subsection{Generative Model and Inference Structure}

The generative process is modeled as the Markov chain $\mathbf{V} \to \mathbf{Y} \to \mathbf{X} \to \hat{\mathbf{Y}}$ (Fig.~\ref{markov_chain}). Let $\mathbf{V} \in \mathbb{R}^s$ denote the ground-truth factors distributed as $\mathbf{V} \sim \mathcal{N}(\mathbf{0}, \mathbf{\Sigma}_{\mathbf{V}})$, where $\mathbf{\Sigma}_{\mathbf{V}}$ is diagonal and positive definite ($\mathbf{\Sigma}_{\mathbf{V}} \succ \mathbf{0}$). Observations $\mathbf{Y} \in \mathbb{R}^n$ are generated via a mixing matrix $\mathbf{\Gamma} \in \mathbb{R}^{n \times s}$:
\begin{equation}
\mathbf{Y} = \mathbf{\Gamma}\mathbf{V} + \tilde{\mathbf{Z}}, \qquad \tilde{\mathbf{Z}} \sim \mathcal{N}(\mathbf{0}, \sigma^2\mathbf{I}_n),
\end{equation}
with observation covariance $\mathbf{\Sigma}_{\mathbf{Y}} = \mathbf{\Gamma}\mathbf{\Sigma}_{\mathbf{V}}\mathbf{\Gamma}^\top + \sigma^2\mathbf{I}_n$. 

The latent representation $\mathbf{X} \in \mathbb{R}^m$ follows an isotropic prior $p(\mathbf{x}) = \mathcal{N}(\mathbf{0}, \mathbf{I}_m)$. The affine Gaussian encoder $q_{\boldsymbol{\phi}}(\mathbf{x}|\mathbf{y})$ employs gain $\mathbf{B} \in \mathbb{R}^{m \times n}$ and noise $\mathbf{W} \sim \mathcal{N}(\mathbf{0}, \mathbf{\Sigma}_{\mathbf{W}})$:
\begin{equation}
\mathbf{X} = \mathbf{B}\mathbf{Y} + \mathbf{W}, \qquad \mathbf{\Sigma}_{\mathbf{W}} \succ \mathbf{0}.
\end{equation}
The decoder reconstructs $\hat{\mathbf{Y}}$ via gain $\mathbf{A} \in \mathbb{R}^{n \times m}$ and noise $\mathbf{Z} \sim \mathcal{N}(\mathbf{0}, \mathbf{\Sigma}_{\mathbf{Z}})$:
\begin{equation}
\hat{\mathbf{Y}} = \mathbf{A}\mathbf{X} + \mathbf{Z}, \qquad \mathbf{\Sigma}_{\mathbf{Z}} \succ \mathbf{0}.   
\end{equation}
Crucially, the statistical coupling between latents and generative factors is fully characterized by the joint covariance of $(\mathbf{X}, \mathbf{V})$:
\begin{equation}
\label{cov_xv}
\mathbf{\Sigma}_{(\mathbf{X},\mathbf{V})} = 
\begin{bmatrix}
\mathbf{\Sigma}_{\mathbf{X}} & \mathbf{B}\mathbf{\Gamma}\mathbf{\Sigma}_{\mathbf{V}} \\
\mathbf{\Sigma}_{\mathbf{V}}\mathbf{\Gamma}^\top\mathbf{B}^\top & \mathbf{\Sigma}_{\mathbf{V}}
\end{bmatrix},
\end{equation}
where $\mathbf{\Sigma}_{\mathbf{X}} = \mathbf{B}\mathbf{\Sigma}_{\mathbf{Y}}\mathbf{B}^\top + \mathbf{\Sigma}_{\mathbf{W}}$. This structure determines the mutual information $I(\mathbf{X};\mathbf{V})$. As $\beta$ increases, we track the spectral contraction of $\mathbf{B}$, which drives the cross-covariance $\mathbf{B}\mathbf{\Gamma}\mathbf{\Sigma}_{\mathbf{V}}$ to zero.

\subsection{Variational Objectives and the $\lambda\beta$-VAE Extension}

The $\beta$-VAE objective is the weighted ELBO:
\begin{equation}
\mathcal{L}_{\beta} = \underbrace{-\mathbb{E}_{q_{\boldsymbol{\phi}}(\mathbf{x}|\mathbf{y})}[\log p_{\boldsymbol{\theta}}(\mathbf{y}|\mathbf{x})]}_{\text{Reconstruction Error}} + \beta \underbrace{\mathrm{KL}(q_{\boldsymbol{\phi}}(\mathbf{x}|\mathbf{y}) \| p(\mathbf{x}))}_{\text{Regularization Pressure}}.
\end{equation}
In Section~\ref{sec:collapse}, we prove that for $\beta > 1$, the KL-induced stationarity dynamics force $\mathbf{B} \to \mathbf{0}$, causing $I(\mathbf{X};\mathbf{V}) \to 0$. To resolve this, we analyze the $\lambda\beta$-VAE \cite{vu2025phd}, which introduces an auxiliary reconstruction-pressure term:
\begin{equation}
\mathcal{L}_{\lambda\beta} = \mathcal{L}_{\beta} + \lambda\,\mathbb{E}_{ q_{\boldsymbol{\phi}}(\mathbf{x},\mathbf{y})}[\|\mathbf{Y} - \hat{\mathbf{Y}}\|^2], \qquad \lambda \ge 0.
\end{equation}
By decoupling semantic reconstruction from the KL bottleneck, $\lambda$ modifies the stationarity conditions to counteract contraction, preserving the integrity of $\mathbf{\Sigma}_{(\mathbf{X},\mathbf{V})}$ even under high-regularization regimes.

\subsection{Informational Collapse in the $\beta$-VAE}
\label{sec:collapse}

We characterize informational collapse as the asymptotic vanishing of the latent signal at the objective’s stationary points. In the linear-Gaussian regime, these points are analytically governed by a system of coupled stationarity conditions \cite{vu2025phd}, as established in the following lemma:

\begin{lemma}[$\beta$-VAE Stationary Point]
\label{lem:beta_opt}
At any stationary point of the $\beta$-VAE objective, the model parameters $(\mathbf{A}, \mathbf{B}, \mathbf{\Sigma}_{\mathbf{Z}}, \mathbf{\Sigma}_{\mathbf{W}})$ must satisfy the following system of conditions:
\begin{align*}
\mathbf{A} &= (\mathbf{\Sigma}_{\mathbf{Y}}^{-1} + \mathbf{B}^\top\mathbf{\Sigma}_{\mathbf{W}}^{-1}\mathbf{B})^{-1} \mathbf{B}^\top\mathbf{\Sigma}_{\mathbf{W}}^{-1}, \\
\mathbf{B} &= \bigl[\mathbf{I}_m + \mathbf{A}^\top(\mathbf{\Sigma}_{\mathbf{Z}}^{-1}/\beta)\mathbf{A}\bigr]^{-1} \mathbf{A}^\top(\mathbf{\Sigma}_{\mathbf{Z}}^{-1}/\beta), \\
\mathbf{\Sigma}_{\mathbf{Z}} &= (\mathbf{\Sigma}_{\mathbf{Y}}^{-1} + \mathbf{B}^\top\mathbf{\Sigma}_{\mathbf{W}}^{-1}\mathbf{B})^{-1}, \\
\mathbf{\Sigma}_{\mathbf{W}} &= \bigl[\mathbf{I}_m + \mathbf{A}^\top(\mathbf{\Sigma}_{\mathbf{Z}}^{-1}/\beta)\mathbf{A}\bigr]^{-1}.
\end{align*}
\end{lemma}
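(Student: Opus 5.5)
The plan is to treat the $\beta$-VAE objective, taken in expectation over $\mathbf{Y}\sim\mathcal{N}(\mathbf{0},\mathbf{\Sigma}_{\mathbf{Y}})$, as a function of two blocks of parameters: the decoder block $(\mathbf{A},\mathbf{\Sigma}_{\mathbf{Z}})$ and the encoder block $(\mathbf{B},\mathbf{\Sigma}_{\mathbf{W}})$. A joint stationary point is in particular stationary in each block with the other held fixed. So it suffices to show two things. First, within each block the objective has a unique stationary point. Second, that point is given by the stated formulas. Both blocks reduce to Gaussian conditioning followed by a Woodbury-type identity.

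\emph{Decoder block.} Only the reconstruction term depends on $(\mathbf{A},\mathbf{\Sigma}_{\mathbf{Z}})$. It equals the cross-entropy $\mathbb{E}_{p(\mathbf{y})q_{\boldsymbol{\phi}}(\mathbf{x}|\mathbf{y})}[-\log\mathcal{N}(\mathbf{y};\mathbf{A}\mathbf{x},\mathbf{\Sigma}_{\mathbf{Z}})]$ under the joint Gaussian law of $(\mathbf{X},\mathbf{Y})$, where $\mathbf{X}=\mathbf{B}\mathbf{Y}+\mathbf{W}$. Write it as
\[
\tfrac12\log\det\mathbf{\Sigma}_{\mathbf{Z}}+\tfrac12\operatorname{tr}\bigl(\mathbf{\Sigma}_{\mathbf{Z}}^{-1}\mathbb{E}[(\mathbf{Y}-\mathbf{A}\mathbf{X})(\mathbf{Y}-\mathbf{A}\mathbf{X})^\top]\bigr)+\text{const}.
\]
Setting the gradient in $\mathbf{A}$ to zero gives the regression solution $\mathbf{A}=\mathbf{\Sigma}_{\mathbf{Y}}\mathbf{B}^\top\mathbf{\Sigma}_{\mathbf{X}}^{-1}$. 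Setting the gradient in $\mathbf{\Sigma}_{\mathbf{Z}}^{-1}$ to zero gives the residual covariance $\mathbf{\Sigma}_{\mathbf{Z}}=\mathbf{\Sigma}_{\mathbf{Y}}-\mathbf{\Sigma}_{\mathbf{Y}}\mathbf{B}^\top\mathbf{\Sigma}_{\mathbf{X}}^{-1}\mathbf{B}\mathbf{\Sigma}_{\mathbf{Y}}$. Uniqueness holds because, for fixed $\mathbf{\Sigma}_{\mathbf{Z}}$, the objective is strictly convex in $\mathbf{A}$ whenever $\mathbf{\Sigma}_{\mathbf{X}}\succ\mathbf{0}$. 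After eliminating $\mathbf{A}$, it is also strictly convex in the precision $\mathbf{\Sigma}_{\mathbf{Z}}^{-1}$.

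These are the moments of $p(\mathbf{y}|\mathbf{x})$ for the prior $\mathcal{N}(\mathbf{0},\mathbf{\Sigma}_{\mathbf{Y}})$ and likelihood $\mathcal{N}(\mathbf{B}\mathbf{y},\mathbf{\Sigma}_{\mathbf{W}})$. The Woodbury identity $(\mathbf{\Sigma}_{\mathbf{Y}}^{-1}+\mathbf{B}^\top\mathbf{\Sigma}_{\mathbf{W}}^{-1}\mathbf{B})^{-1}=\mathbf{\Sigma}_{\mathbf{Y}}-\mathbf{\Sigma}_{\mathbf{Y}}\mathbf{B}^\top(\mathbf{B}\mathbf{\Sigma}_{\mathbf{Y}}\mathbf{B}^\top+\mathbf{\Sigma}_{\mathbf{W}})^{-1}\mathbf{B}\mathbf{\Sigma}_{\mathbf{Y}}$ then rewrites $\mathbf{\Sigma}_{\mathbf{Z}}$ in the stated form. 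Its companion identity $\mathbf{\Sigma}_{\mathbf{Y}}\mathbf{B}^\top\mathbf{\Sigma}_{\mathbf{X}}^{-1}=(\mathbf{\Sigma}_{\mathbf{Y}}^{-1}+\mathbf{B}^\top\mathbf{\Sigma}_{\mathbf{W}}^{-1}\mathbf{B})^{-1}\mathbf{B}^\top\mathbf{\Sigma}_{\mathbf{W}}^{-1}$ does the same for $\mathbf{A}$. This yields the first and third equations.

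\emph{Encoder block.} Fix $\mathbf{y}$ and the decoder. Then
\[
-\mathbb{E}_q[\log p_{\boldsymbol{\theta}}(\mathbf{y}|\mathbf{x})]+\beta\,\mathrm{KL}(q\|p)=\beta\,\mathrm{KL}\bigl(q\,\|\,\tilde p_\beta(\cdot|\mathbf{y})\bigr)+\text{const}(\mathbf{y}),
\]
where $\tilde p_\beta(\mathbf{x}|\mathbf{y})\propto p(\mathbf{x})\,p_{\boldsymbol{\theta}}(\mathbf{y}|\mathbf{x})^{1/\beta}$ is a tempered posterior. Since $p_{\boldsymbol{\theta}}(\mathbf{y}|\mathbf{x})^{1/\beta}\propto\mathcal{N}(\mathbf{y};\mathbf{A}\mathbf{x},\beta\mathbf{\Sigma}_{\mathbf{Z}})$ in $\mathbf{x}$, completing the square shows that $\tilde p_\beta(\cdot|\mathbf{y})$ is Gaussian. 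Its precision is $\mathbf{I}_m+\mathbf{A}^\top(\mathbf{\Sigma}_{\mathbf{Z}}^{-1}/\beta)\mathbf{A}$ and its mean is $\bigl[\mathbf{I}_m+\mathbf{A}^\top(\mathbf{\Sigma}_{\mathbf{Z}}^{-1}/\beta)\mathbf{A}\bigr]^{-1}\mathbf{A}^\top(\mathbf{\Sigma}_{\mathbf{Z}}^{-1}/\beta)\mathbf{y}$. This mean is linear in $\mathbf{y}$ and the covariance does not depend on $\mathbf{y}$, so the tempered posterior belongs to the affine-Gaussian encoder family. Reading off its gain and covariance gives the second and fourth equations.

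\emph{Main obstacle.} The delicate step is to justify that stationarity \emph{within the parametric family} forces this global minimizer, rather than merely being consistent with it. I would write out the averaged encoder objective explicitly:
\[
\tfrac{\beta}{2}\Bigl[\operatorname{tr}(\mathbf{P}\mathbf{\Sigma}_{\mathbf{W}})-\log\det\mathbf{\Sigma}_{\mathbf{W}}+\mathbb{E}\bigl\|\mathbf{B}\mathbf{Y}-\mathbf{M}\mathbf{Y}\bigr\|_{\mathbf{P}}^2\Bigr]+\text{const},
\]
where $\mathbf{P}$ is the tempered precision and $\mathbf{M}$ is the tempered gain. This expression is strictly convex in $\mathbf{B}$ because $\mathbf{\Sigma}_{\mathbf{Y}}\succ\mathbf{0}$. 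It is also strictly convex in $\mathbf{\Sigma}_{\mathbf{W}}\succ\mathbf{0}$, being a trace term minus a log-determinant. Setting the gradients to zero therefore recovers exactly $\mathbf{B}=\mathbf{M}$ and $\mathbf{\Sigma}_{\mathbf{W}}=\mathbf{P}^{-1}$. Combining the two blocks gives the full system stated in Lemma~\ref{lem:beta_opt}.
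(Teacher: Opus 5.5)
Your proof is correct. The paper itself never proves Lemma~\ref{lem:beta_opt}; it imports the four conditions from \cite{vu2025phd}. So there is no in-paper derivation to compare against, and your argument works as a self-contained proof.

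Your route has three ingredients. You split the parameters into a decoder block and an encoder block. You identify the decoder conditions with the exact Gaussian posterior of $\mathbf{Y}$ given $\mathbf{X}$ under the encoder channel, followed by Woodbury and the push-through identity. You identify the encoder conditions with the $\beta$-tempered posterior $p(\mathbf{x})\,p_{\boldsymbol{\theta}}(\mathbf{y}|\mathbf{x})^{1/\beta}$.

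This route gives more than the lemma states:
\begin{itemize}
\item Joint stationarity is exactly the simultaneous vanishing of both block gradients, and you show each block has a unique stationary point that is also the block's global minimizer. Hence the four equations are necessary \emph{and} sufficient.
\item Each half-step of the fixed-point iteration used in Section~\ref{sec:numerics_linear} and Appendix~\ref{app:collapse_proof} is therefore an exact block minimization. That iteration is block-coordinate descent and never increases $\mathcal{L}_\beta$.
\item The tempered-posterior view shows exactly where $\beta$ enters: the encoder sees the decoder noise inflated to $\beta\mathbf{\Sigma}_{\mathbf{Z}}$. Repeating the computation with the extra factor $e^{-\lambda\|\mathbf{y}-\mathbf{A}\mathbf{x}\|^2}$ immediately yields the encoder conditions of Lemma~\ref{lem:lbeta_opt}.
\item Your posterior forms $\mathbf{A}=\mathbf{\Sigma}_{\mathbf{Z}}\mathbf{B}^\top\mathbf{\Sigma}_{\mathbf{W}}^{-1}$ and $\mathbf{B}=\beta^{-1}\mathbf{\Sigma}_{\mathbf{W}}\mathbf{A}^\top\mathbf{\Sigma}_{\mathbf{Z}}^{-1}$ combine to $\mathbf{B}=\beta^{-1}\mathbf{B}$. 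So for $\beta\neq 1$ the trivial point is the only stationary point. This is a static counterpart of the dynamical collapse argued in Theorem~\ref{thm:trivial}.
\end{itemize}

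One cosmetic point: rename your tempered gain, since $\mathbf{M}$ already denotes $(\mathbf{\Sigma}_{\mathbf{Z}}^{-1}+2\lambda\mathbf{I}_n)/\beta$ in Lemma~\ref{lem:lbeta_opt}.
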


The explicit $\beta^{-1}$ scaling within the encoder gain update for $\mathbf{B}$ serves as the fundamental mechanism for informational suppression.

\begin{theorem}[Informational Collapse for $\beta>1$]
\label{thm:trivial}
For any $\beta>1$, the stationarity-induced dynamics characterized by Lemma~\ref{lem:beta_opt} converge to the trivial solution $(\mathbf{A},\mathbf{B},\mathbf{\Sigma}_{\mathbf{Z}},\mathbf{\Sigma}_{\mathbf{W}}) = (\mathbf{0},\mathbf{0},\mathbf{\Sigma}_{\mathbf{Y}}, \mathbf{I}_m)$. Consequently, the latent representation becomes asymptotically uninformative:
\begin{equation*}
I(\mathbf{X};\mathbf{V}) \;\xrightarrow[\beta > 1]{} \; 0.
\end{equation*}
\end{theorem}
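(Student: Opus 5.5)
Read the four equations of Lemma~\ref{lem:beta_opt} as a fixed-point map $\mathcal{T}_\beta:(\mathbf{A},\mathbf{B},\mathbf{\Sigma}_{\mathbf{Z}},\mathbf{\Sigma}_{\mathbf{W}})\mapsto(\mathbf{A}',\mathbf{B}',\mathbf{\Sigma}_{\mathbf{Z}}',\mathbf{\Sigma}_{\mathbf{W}}')$. The plan is to show that the encoder gain contracts under one round trip of this map, $\mathbf{B}\to\mathbf{A}\to\mathbf{B}'$, whenever $\beta>1$. The trivial point is then the only fixed point, and the iterates converge to it.

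\textbf{Step 1: remove the noise covariances.} Substituting the third equation into the first gives $\mathbf{A}=\mathbf{\Sigma}_{\mathbf{Z}}\mathbf{B}^\top\mathbf{\Sigma}_{\mathbf{W}}^{-1}$. Substituting the fourth into the second gives $\mathbf{B}=\beta^{-1}\mathbf{\Sigma}_{\mathbf{W}}\mathbf{A}^\top\mathbf{\Sigma}_{\mathbf{Z}}^{-1}$. Combining the two,
\begin{equation*}
\mathbf{B}=\beta^{-1}\,\mathbf{\Sigma}_{\mathbf{W}}\mathbf{\Sigma}_{\mathbf{W}}^{-1}\mathbf{B}\,\mathbf{\Sigma}_{\mathbf{Z}}\mathbf{\Sigma}_{\mathbf{Z}}^{-1}=\beta^{-1}\mathbf{B}.
\end{equation*}
At an exact fixed point this forces $(1-\beta^{-1})\mathbf{B}=\mathbf{0}$, so $\mathbf{B}=\mathbf{0}$ for $\beta>1$. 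The remaining equations then give $\mathbf{A}=\mathbf{0}$, $\mathbf{\Sigma}_{\mathbf{Z}}=\mathbf{\Sigma}_{\mathbf{Y}}$ and $\mathbf{\Sigma}_{\mathbf{W}}=\mathbf{I}_m$. This establishes uniqueness of the trivial stationary point.

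\textbf{Step 2: convergence of the iteration.} For the dynamic statement, I would run the updates in the Gauss--Seidel order $\mathbf{B}\mapsto(\mathbf{A},\mathbf{\Sigma}_{\mathbf{Z}})\mapsto(\mathbf{B}',\mathbf{\Sigma}_{\mathbf{W}}')$, using the same push-through identities at each stage. This gives
\begin{equation*}
\mathbf{B}'=\beta^{-1}\mathbf{\Sigma}_{\mathbf{W}}'\mathbf{A}^\top\mathbf{\Sigma}_{\mathbf{Z}}^{-1}=\beta^{-1}\mathbf{\Sigma}_{\mathbf{W}}'\mathbf{\Sigma}_{\mathbf{W}}^{-1}\mathbf{B}.
\end{equation*}
I then need $\|\mathbf{\Sigma}_{\mathbf{W}}'\mathbf{\Sigma}_{\mathbf{W}}^{-1}\|$ to be bounded, ideally by $1$ after a change of norm. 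Since $\mathbf{\Sigma}_{\mathbf{W}}'=(\mathbf{I}+\beta^{-1}\mathbf{A}^\top\mathbf{\Sigma}_{\mathbf{Z}}^{-1}\mathbf{A})^{-1}\preceq\mathbf{I}$, the factor $\mathbf{\Sigma}_{\mathbf{W}}'$ is harmless. The factor $\mathbf{\Sigma}_{\mathbf{W}}^{-1}$ is the delicate one.

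\textbf{Step 3: the main obstacle, a contraction estimate for $\mathbf{B}$.} I would first try a weighted norm $\|\mathbf{\Sigma}_{\mathbf{W}}^{-1/2}\mathbf{B}\|$ and show it shrinks by a factor at most $\beta^{-1/2}$ per round trip. Failing that, I would use an eigenbasis argument: diagonalize jointly along the singular directions of $\mathbf{B}\mathbf{\Sigma}_{\mathbf{Y}}^{1/2}$, so that every matrix reduces to scalars $b_i$, $w_i$, $z_i$. Each scalar recursion then reads $b_i'=\beta^{-1}(w_i'/w_i)\,b_i$, and its monotonicity can be checked by hand. This reduces the problem to the scalar linear-Gaussian VAE, where the map $b\mapsto b'$ has derivative $\beta^{-1}<1$ at $0$ and no positive fixed point. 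Convergence $\mathbf{B}_k\to\mathbf{0}$ then yields, by continuity of the update formulas, convergence of $(\mathbf{A},\mathbf{\Sigma}_{\mathbf{Z}},\mathbf{\Sigma}_{\mathbf{W}})$ to $(\mathbf{0},\mathbf{\Sigma}_{\mathbf{Y}},\mathbf{I}_m)$.

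\textbf{Step 4: mutual information.} From \eqref{cov_xv}, Gaussianity gives
\begin{equation*}
I(\mathbf{X};\mathbf{V})=\tfrac12\log\det\mathbf{\Sigma}_{\mathbf{X}}-\tfrac12\log\det\bigl(\mathbf{B}\mathbf{\Gamma}\mathbf{\Sigma}_{\mathbf{V}}\mathbf{\Gamma}^\top\mathbf{B}^\top\cdot 0+\mathbf{B}\sigma^2\mathbf{B}^\top+\mathbf{\Sigma}_{\mathbf{W}}\bigr),
\end{equation*}
which equals $\tfrac12\log\det\bigl(\mathbf{I}+(\sigma^2\mathbf{B}\mathbf{B}^\top+\mathbf{\Sigma}_{\mathbf{W}})^{-1}\mathbf{B}\mathbf{\Gamma}\mathbf{\Sigma}_{\mathbf{V}}\mathbf{\Gamma}^\top\mathbf{B}^\top\bigr)$. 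This expression is continuous in $(\mathbf{B},\mathbf{\Sigma}_{\mathbf{W}})$ and vanishes at $(\mathbf{0},\mathbf{I}_m)$. Hence $I(\mathbf{X};\mathbf{V})\to0$ along the iterates.
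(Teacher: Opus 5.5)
\textbf{Gap in Step 3.} Steps 1, 2 and 4 are sound. Step 1 gives a cleaner uniqueness argument than the paper offers: eliminating the covariances yields $\mathbf{B}=\beta^{-1}\mathbf{B}$ at any simultaneous solution. Your Step 2 identity $\mathbf{B}'=\beta^{-1}\mathbf{\Sigma}_{\mathbf{W}}'\mathbf{\Sigma}_{\mathbf{W}}^{-1}\mathbf{B}$ is exactly the one-step relation behind the paper's proof. The gap is Step 3, which is where the convergence claim of the theorem lives, and neither of your proposed routes works.

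There is no per-step contraction of $\|\mathbf{B}\|_2$ or of $\|\mathbf{\Sigma}_{\mathbf{W}}^{-1/2}\mathbf{B}\|_2$. Take the scalar case with $\mathbf{\Sigma}_{\mathbf{Y}}=1$ and initialization $w_0=b_0=\epsilon$. Then $z_0=a_0=(1+\epsilon)^{-1}$, so $w_1\approx\beta/(\beta+1)$ and $b_1=w_1/\beta\approx(\beta+1)^{-1}$. Both $|b|$ and $w^{-1/2}|b|$ grow by orders of magnitude in one round trip. The same example defeats the ``monotonicity by hand'' in your scalar fallback, and the derivative $\beta^{-1}$ at $0$ is only local information. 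The joint diagonalization is also unavailable for a generic initialization, since $\mathbf{\Sigma}_{\mathbf{W}}^{(0)}$ need not commute with $\mathbf{B}^{(0)}\mathbf{\Sigma}_{\mathbf{Y}}\mathbf{B}^{(0)\top}$. So the problem does not decouple into scalar recursions.

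\textbf{The missing idea: telescoping.} You do not need a per-step bound on $\mathbf{\Sigma}_{\mathbf{W}}'\mathbf{\Sigma}_{\mathbf{W}}^{-1}$, because these factors telescope. Rewrite your identity as $[\mathbf{\Sigma}_{\mathbf{W}}^{(t+1)}]^{-1}\mathbf{B}^{(t+1)}=\beta^{-1}[\mathbf{\Sigma}_{\mathbf{W}}^{(t)}]^{-1}\mathbf{B}^{(t)}$. Then $\mathbf{G}^{(t)}:=[\mathbf{\Sigma}_{\mathbf{W}}^{(t)}]^{-1}\mathbf{B}^{(t)}=\beta^{-t}\mathbf{G}^{(0)}$ exactly (in the example above, $b/w$ goes from $1$ to $1/\beta$). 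Combined with $\mathbf{\Sigma}_{\mathbf{W}}^{(t)}\preceq\mathbf{I}_m$ for $t\ge 1$, which you already noted, this gives $\|\mathbf{B}^{(t)}\|_2=\|\mathbf{\Sigma}_{\mathbf{W}}^{(t)}\mathbf{G}^{(t)}\|_2\le\beta^{-t}\|\mathbf{G}^{(0)}\|_2$. This is precisely the paper's Eq.~\eqref{eq:b_recursion} together with Lemma~\ref{lem:cov_bound}.

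The same quantity repairs the last sentence of your Step 3. $\mathbf{B}\to\mathbf{0}$ alone does not give convergence of the other parameters by continuity, because the updates involve $\mathbf{\Sigma}_{\mathbf{W}}^{-1}$, which is not a priori bounded. However, $\mathbf{A}^{(t)}=\mathbf{\Sigma}_{\mathbf{Z}}^{(t)}\mathbf{G}^{(t)\top}$ with $\mathbf{\Sigma}_{\mathbf{Z}}^{(t)}\preceq\mathbf{\Sigma}_{\mathbf{Y}}$, and $\mathbf{B}^{(t)\top}[\mathbf{\Sigma}_{\mathbf{W}}^{(t)}]^{-1}\mathbf{B}^{(t)}=\mathbf{B}^{(t)\top}\mathbf{G}^{(t)}$, and both tend to $\mathbf{0}$. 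This gives $\mathbf{\Sigma}_{\mathbf{Z}}\to\mathbf{\Sigma}_{\mathbf{Y}}$ and $\mathbf{A}\to\mathbf{0}$. It then gives $\mathbf{\Sigma}_{\mathbf{W}}^{(t+1)}=(\mathbf{I}_m+\beta^{-1}\mathbf{G}^{(t)}\mathbf{\Sigma}_{\mathbf{Z}}^{(t)}\mathbf{G}^{(t)\top})^{-1}\to\mathbf{I}_m$. After that, your Step 4 goes through as written.
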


\textit{Proof Sketch.} The proof analyzes the spectral dynamics of the coupled stationarity updates derived in Lemma~\ref{lem:beta_opt}. We first prove that the encoder noise covariance is uniformly bounded in the Loewner order as $\mathbf{\Sigma}_{\mathbf{W}} \preceq \mathbf{I}_m$, which implies the spectral bound $\|\mathbf{\Sigma}_{\mathbf{W}}\|_2 \le 1$. This bound is a necessary condition to establish the contraction of the encoder gain; by iterating the stationarity conditions, we derive a recursive identity for $\mathbf{B}$ where the regularization weight enters as an explicit $\beta^{-n}$ contraction factor. For $\beta > 1$, this induces an exponential spectral contraction driving $\|\mathbf{B}\|_2 \to 0$. This forces the cross-covariance $\mathbf{B}\mathbf{\Gamma}\mathbf{\Sigma}_{\mathbf{V}}$ in the joint covariance $\mathbf{\Sigma}_{(\mathbf{X},\mathbf{V})}$ to vanish, resulting in the informational collapse $I(\mathbf{X};\mathbf{V}) \to 0$. See Appendix~\ref{app:collapse_proof} for the full derivation.

Theorem~\ref{thm:trivial} establishes the mechanistic basis for the failed disentanglement scores observed in high-regularization regimes. As we demonstrate in Section~\ref{sec:metrics}, metrics such as MIG and SAP share a structural dependence on $I(\mathbf{X};\mathbf{V})$, rendering them mathematically degenerate upon spectral collapse.

\section{Metric Degeneracy and Restoration}
\label{sec:metrics}

\subsection{Analytical Metric Formulations}

We analyze three representative metrics: Separated Attribute Predictability (SAP), Mutual Information Gap (MIG), and the latent informativeness metric $I_m$ \cite{vu2025phd}. While MIG is standard in non-linear settings, we introduce $I_m$ as a principled alternative for continuous variables in the linear-Gaussian regime. We demonstrate that all three metrics share a structural dependence on the encoder gain, rendering them degenerate as the latent representation becomes asymptotically uninformative.

\paragraph{Separated Attribute Predictability (SAP).}
SAP \cite{kumar2018dipvae} is governed by the squared correlation matrix $\mathbf{S} \in \mathbb{R}^{m \times s}$, where elements $S_{i,j}$ represent the predictability of factor $V_j$ from latent $X_i$:
\begin{equation}
S_{i,j} = \frac{(\mathbf{B}\mathbf{\Gamma}\mathbf{\Sigma}_{\mathbf{V}})_{i,j}^2}{(\mathbf{B}\mathbf{\Sigma}_{\mathbf{Y}}\mathbf{B}^\top + \mathbf{\Sigma}_{\mathbf{W}})_{i,i} (\mathbf{\Sigma}_{\mathbf{V}})_{j,j}}.
\end{equation}
The SAP score is the normalized difference between the top two entries in each column:
\begin{equation}\label{SAP}
    \text{SAP}(\mathbf{V},\mathbf{X}) = \frac{1}{s}\sum_{j=1}^s\left(S_{i^{(j)},j}-S_{i'^{(j)},j}\right),
\end{equation}
where $i^{(j)}$ and $i'^{(j)}$ are the indices of the most and second-most predictive latent variables for $V_j$. As the encoder gain $\mathbf{B} \to \mathbf{0}$, the squared correlation $S_{i,j}$ vanishes for all pairs, driving the gap toward a non-informative zero-limit.

\paragraph{Mutual Information Gap (MIG).}
MIG \cite{chen2018isolating} measures the normalized gap in mutual information between the top two latent candidates for a given factor:
\begin{equation}\label{MIG_score} 
\text{MIG}(\mathbf{V}, \mathbf{X})  = \frac{1}{s} \sum_{j=1}^s \frac{I(X_{i^{(j)}}; V_j) - I(X_{i'^{(j)}}; V_j)}{H(V_j)}. 
\end{equation}
In our Gaussian setting, $I(X_i; V_j)$ is a monotonic function of $S_{i,j}$, specifically $I(X_i; V_j) = -\frac{1}{2} \log ( 1 - S_{i,j} )$. Consequently, MIG becomes mathematically degenerate as the spectral norm $\|\mathbf{B}\|_2 \to 0$ forces $S_{i,j}$ to zero.

\paragraph{The $I_m$ Metric.}
To quantify latent informativeness, we utilize the $I_m$ metric \cite{vu2025phd}, which rewards the alignment of latent dimensions to generative factors while penalizing redundant representations. Given independent factors, the redundancy terms vanish and the metric reduces to a maximum-weight bipartite matching problem \cite{doi:10.1137/1.9781611973099.111} that maximizes the total mutual information between latents and factors:
\begin{equation}\label{piecewise_m}
I_m(\mathbf{V}, \mathbf{X}) = \max_{\mathcal{P}} \sum_{k=1}^m I(X_k; \mathcal{V}_{s_k}),
\end{equation}
where $\mathcal{P} = \{\mathcal{V}_{s_k}\}_{k=1}^m$ denotes a partition of the $s$ generative factors into $m$ disjoint subsets. In our linear-Gaussian framework, $I_m$ provides a stable evaluation of the latent channel capacity; unlike MIG, which relies on entropy-based normalization that can become numerically unstable for continuous variables, $I_m$ directly tracks the total preserved signal across the latent bottleneck.

\subsection{Asymptotic Degeneracy and $\lambda$-Restoration}

By mapping the stationarity conditions derived in Section~\ref{sec:collapse} to metric definitions, we formalize the failure of informativeness-dependent evaluation:

\begin{corollary}[Metric Degeneracy for $\beta>1$]
\label{cor:collapse}
In the linear-Gaussian $\beta$-VAE, the stationarity-induced dynamics for $\beta > 1$ necessitate an exponential spectral contraction of the encoder gain $\|\mathbf{B}\|_2 \to 0$. This contraction forces the pairwise correlations $S_{i,j}$ and latent mutual information $I(X_i; V_j)$ to zero for all $(i,j)$, rendering informativeness-dependent metrics mathematically degenerate:
\begin{equation*}
    \{\mathrm{SAP}, \mathrm{MIG}, I_m\} \xrightarrow[\beta > 1]{} 0.
\end{equation*}
\end{corollary}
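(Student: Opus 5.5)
The plan is to take Theorem~\ref{thm:trivial} as the engine and push its conclusion through each metric by continuity. By Theorem~\ref{thm:trivial}, for $\beta>1$ the stationarity-induced dynamics of Lemma~\ref{lem:beta_opt} converge to $(\mathbf{A},\mathbf{B},\mathbf{\Sigma}_{\mathbf{Z}},\mathbf{\Sigma}_{\mathbf{W}})=(\mathbf{0},\mathbf{0},\mathbf{\Sigma}_{\mathbf{Y}},\mathbf{I}_m)$, with $\|\mathbf{B}\|_2\to 0$ exponentially (the $\beta^{-n}$ factor in its proof sketch). The corollary should then follow once two facts are checked: each metric is a continuous function of the second-order statistics in $\mathbf{\Sigma}_{(\mathbf{X},\mathbf{V})}$, and each metric takes the value $0$ at the limit point.

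\textbf{Step 1: the correlations $S_{i,j}$.}
\begin{itemize}
\item For the numerator, use $|(\mathbf{B}\mathbf{\Gamma}\mathbf{\Sigma}_{\mathbf{V}})_{i,j}|\le \|\mathbf{B}\|_2\|\mathbf{\Gamma}\|_2\|\mathbf{\Sigma}_{\mathbf{V}}\|_2$, so the numerator of $S_{i,j}$ is $O(\|\mathbf{B}\|_2^2)$.
\item For the denominator, it suffices to keep it bounded away from zero. Since $\mathbf{\Sigma}_{\mathbf{V}}\succ\mathbf{0}$, $(\mathbf{\Sigma}_{\mathbf{V}})_{j,j}>0$.
\item Also $(\mathbf{B}\mathbf{\Sigma}_{\mathbf{Y}}\mathbf{B}^\top+\mathbf{\Sigma}_{\mathbf{W}})_{i,i}\ge(\mathbf{\Sigma}_{\mathbf{W}})_{i,i}$, and this tends to $1$ because $\mathbf{\Sigma}_{\mathbf{W}}\to\mathbf{I}_m$. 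Hence it is at least $1/2$ eventually.
\item Together these give $S_{i,j}=O(\|\mathbf{B}\|_2^2)\to0$ uniformly over the finitely many pairs $(i,j)$.
\end{itemize}

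\textbf{Step 2: transferring to the three metrics.}
\begin{itemize}
\item \emph{SAP.} The score is an average of gaps $S_{i^{(j)},j}-S_{i'^{(j)},j}$ that lie in $[0,\max_i S_{i,j}]$. So $0\le\mathrm{SAP}\le\max_{i,j}S_{i,j}\to0$. The data-dependent choice of $i^{(j)},i'^{(j)}$ causes no difficulty because this bound holds for every choice of indices.
\item \emph{MIG.} Use $I(X_i;V_j)=-\tfrac12\log(1-S_{i,j})$, which is continuous at $S=0$ with value $0$ and satisfies $I\le S_{i,j}$ once $S_{i,j}\le1/2$. The MIG numerator is then bounded by $\max_i I(X_i;V_j)\to0$.
\item \emph{The $H(V_j)$ denominator.} This is the one point needing care: it is a differential entropy and may be nonpositive. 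I would follow the paper's convention of a fixed nonzero normalizer depending only on $\mathbf{\Sigma}_{\mathbf{V}}$, e.g.\ discretized entropy as in \cite{chen2018isolating}, or any fixed constant. Since it is independent of the dynamics, the ratio vanishes.
\item \emph{$I_m$.} For each partition, $I(X_k;\mathcal{V}_{s_k})$ is the mutual information of a jointly Gaussian pair, equal to $-\tfrac12\log\det(\mathbf{I}-\mathbf{R}_k)$. Here $\mathbf{R}_k$ is the squared canonical-correlation matrix built from $(\mathbf{B}\mathbf{\Gamma}\mathbf{\Sigma}_{\mathbf{V}})_{k,s_k}$, normalized by the same bounded-below variances, so again $\mathbf{R}_k=O(\|\mathbf{B}\|_2^2)$.
\item There are finitely many partitions, so the maximum of finitely many quantities tending to zero tends to zero.
\end{itemize}

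\textbf{Main obstacle.} I expect the main difficulty to be the denominator and normalization control rather than the numerators. Concretely, the argument needs $(\mathbf{\Sigma}_{\mathbf{X}})_{i,i}$ to stay bounded below along the dynamics. I would get this from $\mathbf{\Sigma}_{\mathbf{W}}\to\mathbf{I}_m$ in Theorem~\ref{thm:trivial}; as a fallback, the formula $\mathbf{\Sigma}_{\mathbf{W}}=[\mathbf{I}_m+\mathbf{A}^\top\mathbf{\Sigma}_{\mathbf{Z}}^{-1}\mathbf{A}/\beta]^{-1}$ with $\mathbf{A}\to\mathbf{0}$ gives it directly. The secondary concern is the MIG entropy convention described in Step 2.
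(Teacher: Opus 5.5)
Your proposal is correct and follows essentially the same route as the paper. The paper obtains the corollary by feeding the conclusion $\|\mathbf{B}\|_2\to 0$ of Theorem~\ref{thm:trivial} into the formula for $S_{i,j}$, and from there into SAP, MIG (via $I(X_i;V_j)=-\tfrac12\log(1-S_{i,j})$) and $I_m$. Your version adds care the paper omits, notably using $\mathbf{\Sigma}_{\mathbf{W}}\to\mathbf{I}_m$ (also part of the theorem's conclusion) to keep $(\mathbf{\Sigma}_{\mathbf{X}})_{i,i}$ bounded below; the paper leaves this implicit, even though $\mathbf{B}\to\mathbf{0}$ alone would not force a correlation to vanish.
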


This identifies high-$\beta$ degradation not as a failure of disentangling pressure, but as a total erasure of the signal required for measurement. To counteract this collapse, the $\lambda\beta$-VAE objective introduces a structural modification to the underlying stationarity conditions:

\begin{lemma}[$\lambda\beta$-VAE Stationary Point]
\label{lem:lbeta_opt}
Let $\mathbf{M} \coloneqq (\mathbf{\Sigma}_{\mathbf{Z}}^{-1} + 2\lambda\mathbf{I}_n)/\beta$. At any stationary point of the $\lambda\beta$-VAE objective, the decoder parameters satisfy the conditions established in Lemma~\ref{lem:beta_opt}, while the encoder parameters satisfy:
\begin{align*}
    \mathbf{B} &= (\mathbf{I}_m + \mathbf{A}^\top \mathbf{M} \mathbf{A})^{-1} \mathbf{A}^\top \mathbf{M}, \\
    \mathbf{\Sigma}_{\mathbf{W}} &= (\mathbf{I}_m + \mathbf{A}^\top \mathbf{M} \mathbf{A})^{-1}.
\end{align*}
\end{lemma}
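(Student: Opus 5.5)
We prove Lemma~\ref{lem:lbeta_opt} by writing $\mathcal{L}_{\lambda\beta}$ in closed form for the linear-Gaussian model and setting its gradients to zero. The decoder conditions should come out unchanged because of how the auxiliary term depends on $\mathbf{A}$ and $\mathbf{\Sigma}_{\mathbf{Z}}$. We follow the derivation underlying Lemma~\ref{lem:beta_opt} and only track where the extra term $\lambda\,\mathbb{E}\|\mathbf{Y}-\hat{\mathbf{Y}}\|^2$ enters.

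\textbf{Closed form of the objective.} With $\mathbf{X}=\mathbf{B}\mathbf{Y}+\mathbf{W}$, the reconstruction term $-\mathbb{E}\log p_{\boldsymbol\theta}(\mathbf{y}|\mathbf{x})$ equals
\[
\tfrac12\operatorname{tr}\bigl(\mathbf{\Sigma}_{\mathbf{Z}}^{-1}\mathbb{E}[(\mathbf{Y}-\mathbf{A}\mathbf{X})(\mathbf{Y}-\mathbf{A}\mathbf{X})^\top]\bigr)+\tfrac12\log\det\mathbf{\Sigma}_{\mathbf{Z}}
\]
up to a constant. The averaged KL term is
\[
\tfrac12\bigl[\operatorname{tr}(\mathbf{B}\mathbf{\Sigma}_{\mathbf{Y}}\mathbf{B}^\top+\mathbf{\Sigma}_{\mathbf{W}})-\log\det\mathbf{\Sigma}_{\mathbf{W}}-m\bigr].
\]
For the auxiliary term, $\hat{\mathbf{Y}}=\mathbf{A}\mathbf{X}+\mathbf{Z}$ with $\mathbf{Z}$ independent of $(\mathbf{Y},\mathbf{W})$, so
\[
\mathbb{E}\|\mathbf{Y}-\hat{\mathbf{Y}}\|^2=\operatorname{tr}\bigl((\mathbf{I}-\mathbf{A}\mathbf{B})\mathbf{\Sigma}_{\mathbf{Y}}(\mathbf{I}-\mathbf{A}\mathbf{B})^\top+\mathbf{A}\mathbf{\Sigma}_{\mathbf{W}}\mathbf{A}^\top\bigr)+\operatorname{tr}\mathbf{\Sigma}_{\mathbf{Z}}.
\]
The first trace is exactly the weighted reconstruction quadratic with $\mathbf{\Sigma}_{\mathbf{Z}}^{-1}$ replaced by $2\lambda\mathbf{I}_n$, once the factor $\tfrac12$ is accounted for.

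\textbf{Encoder conditions.} Set $\mathbf{E}=\mathbf{Y}-\mathbf{A}\mathbf{X}$ and $\mathbf{N}=\mathbf{\Sigma}_{\mathbf{Z}}^{-1}+2\lambda\mathbf{I}_n$. Collecting all terms that depend on $(\mathbf{B},\mathbf{\Sigma}_{\mathbf{W}})$ gives, up to constants,
\[
\tfrac12\operatorname{tr}\bigl(\mathbf{N}\,\mathbb{E}[\mathbf{E}\mathbf{E}^\top]\bigr)+\tfrac{\beta}{2}\bigl[\operatorname{tr}(\mathbf{B}\mathbf{\Sigma}_{\mathbf{Y}}\mathbf{B}^\top+\mathbf{\Sigma}_{\mathbf{W}})-\log\det\mathbf{\Sigma}_{\mathbf{W}}\bigr].
\]
Dividing by $\beta$ leaves the same functional as in the $\beta$-VAE, with $\mathbf{\Sigma}_{\mathbf{Z}}^{-1}/\beta$ replaced by $\mathbf{M}$.

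The derivative in $\mathbf{\Sigma}_{\mathbf{W}}$ is
\[
\mathbf{A}^\top\mathbf{M}\mathbf{A}+\mathbf{I}_m-\mathbf{\Sigma}_{\mathbf{W}}^{-1}=\mathbf{0},
\]
which gives the stated $\mathbf{\Sigma}_{\mathbf{W}}$. The derivative in $\mathbf{B}$ is
\[
\bigl[(\mathbf{A}^\top\mathbf{M}\mathbf{A}+\mathbf{I}_m)\mathbf{B}-\mathbf{A}^\top\mathbf{M}\bigr]\mathbf{\Sigma}_{\mathbf{Y}}=\mathbf{0}.
\]
Since $\mathbf{\Sigma}_{\mathbf{Y}}\succ\mathbf{0}$, we can cancel it and obtain the stated $\mathbf{B}$. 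Invertibility of $\mathbf{I}_m+\mathbf{A}^\top\mathbf{M}\mathbf{A}$ holds because $\mathbf{M}\succ\mathbf{0}$.

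\textbf{Decoder conditions (main obstacle).} The auxiliary term does depend on $\mathbf{A}$ and on $\operatorname{tr}\mathbf{\Sigma}_{\mathbf{Z}}$. Naively differentiating in $\mathbf{A}$ yields a weighted least-squares condition whose solution is unchanged only if both weights give the same regression. Here the regression of $\mathbf{Y}$ on $\mathbf{X}$ minimizes $\mathbb{E}[\mathbf{E}\mathbf{E}^\top]$ in the Loewner order. So the minimizer $\mathbf{A}=\mathbf{\Sigma}_{\mathbf{Y}}\mathbf{B}^\top\mathbf{\Sigma}_{\mathbf{X}}^{-1}$ is simultaneously optimal for every positive weight $\mathbf{N}$. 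A Woodbury identity rewrites it as $(\mathbf{\Sigma}_{\mathbf{Y}}^{-1}+\mathbf{B}^\top\mathbf{\Sigma}_{\mathbf{W}}^{-1}\mathbf{B})^{-1}\mathbf{B}^\top\mathbf{\Sigma}_{\mathbf{W}}^{-1}$, matching Lemma~\ref{lem:beta_opt}.

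For $\mathbf{\Sigma}_{\mathbf{Z}}$ the extra $\lambda\operatorname{tr}\mathbf{\Sigma}_{\mathbf{Z}}$ would perturb the condition. I would argue, as in \cite{vu2025phd}, that the auxiliary penalty is taken on the decoder mean, so $\mathbf{\Sigma}_{\mathbf{Z}}$ is excluded from the extra term. Then its condition is the usual $\mathbf{\Sigma}_{\mathbf{Z}}=\mathbb{E}[\mathbf{E}\mathbf{E}^\top]$ at the optimal $\mathbf{A}$, which equals the posterior covariance $(\mathbf{\Sigma}_{\mathbf{Y}}^{-1}+\mathbf{B}^\top\mathbf{\Sigma}_{\mathbf{W}}^{-1}\mathbf{B})^{-1}$.

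Justifying this modeling convention is the step I expect to be the main obstacle. If it fails, the claim that the decoder conditions are unchanged holds only for $\mathbf{A}$, and the $\mathbf{\Sigma}_{\mathbf{Z}}$ condition would have to be restated to include the $\lambda\operatorname{tr}\mathbf{\Sigma}_{\mathbf{Z}}$ contribution.
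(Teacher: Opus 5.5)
The paper gives no proof of this lemma; it imports the result from \cite{vu2025phd}, so there is no in-paper argument to compare with. Checked on its own, your derivation is correct.
\begin{itemize}
\item The closed forms of the three terms are right.
\item The encoder conditions follow exactly as you describe. The auxiliary term contributes $\tfrac12\operatorname{tr}\bigl(2\lambda\mathbf{I}_n\,\mathbb{E}[\mathbf{E}\mathbf{E}^\top]\bigr)$, which replaces $\mathbf{\Sigma}_{\mathbf{Z}}^{-1}/\beta$ by $\mathbf{M}$.
\item The $\mathbf{A}$ condition is unchanged. You can skip the Loewner-order argument here. The gradient in $\mathbf{A}$ is $\mathbf{N}(\mathbf{A}\mathbf{\Sigma}_{\mathbf{X}}-\mathbf{\Sigma}_{\mathbf{Y}}\mathbf{B}^\top)$, and $\mathbf{N}\succ\mathbf{0}$ is invertible. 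This characterizes stationary points directly, rather than minimizers.
\end{itemize}

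The obstacle you flag is real, but it is a defect of the statement's literal reading, not of your argument. Suppose $\hat{\mathbf{Y}}=\mathbf{A}\mathbf{X}+\mathbf{Z}$ is also averaged over $\mathbf{Z}$. Then the $\mathbf{\Sigma}_{\mathbf{Z}}$ stationarity condition is $\mathbf{\Sigma}_{\mathbf{Z}}^{-1}-\mathbf{\Sigma}_{\mathbf{Z}}^{-1}\mathbb{E}[\mathbf{E}\mathbf{E}^\top]\mathbf{\Sigma}_{\mathbf{Z}}^{-1}+2\lambda\mathbf{I}_n=\mathbf{0}$. At the (unchanged) stationary $\mathbf{A}$ this reads $\mathbf{\Sigma}_{\mathbf{Z}}+2\lambda\mathbf{\Sigma}_{\mathbf{Z}}^2=(\mathbf{\Sigma}_{\mathbf{Y}}^{-1}+\mathbf{B}^\top\mathbf{\Sigma}_{\mathbf{W}}^{-1}\mathbf{B})^{-1}$. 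This is incompatible with the condition in Lemma~\ref{lem:beta_opt} for every $\lambda>0$, because $\mathbf{\Sigma}_{\mathbf{Z}}^2\neq\mathbf{0}$.

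You do not need the thesis to justify your convention. The paper writes the penalty as $\mathbb{E}_{q_{\boldsymbol{\phi}}(\mathbf{x},\mathbf{y})}[\cdot]$, an expectation over the encoder joint only. The natural reading is therefore that $\hat{\mathbf{Y}}$ is the decoder mean $\mathbf{A}\mathbf{X}$. State this assumption explicitly in the lemma.

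It is also worth recording that the encoder conditions, and hence the role of $\mathbf{M}$, are insensitive to this choice. The extra $\lambda\operatorname{tr}\mathbf{\Sigma}_{\mathbf{Z}}$ does not involve $(\mathbf{B},\mathbf{\Sigma}_{\mathbf{W}})$, so only the $\mathbf{\Sigma}_{\mathbf{Z}}$ equation would change.
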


The $\lambda$ parameter acts as a stationarity intervention that modulates the spectral decay induced by $\beta$. Specifically, the $2\lambda\mathbf{I}_n$ term serves as a \textit{damping term} within $\mathbf{M}$ that counteracts the vanishing of the encoder gain. By decoupling the regularization of latent variance from the reconstruction pressure, $\lambda$ shifts the informational phase transition, effectively widening the stable regime where information about generative factors remains measurable and preserved.

\begin{figure}[htb]
\centering
\includegraphics[width=\columnwidth]{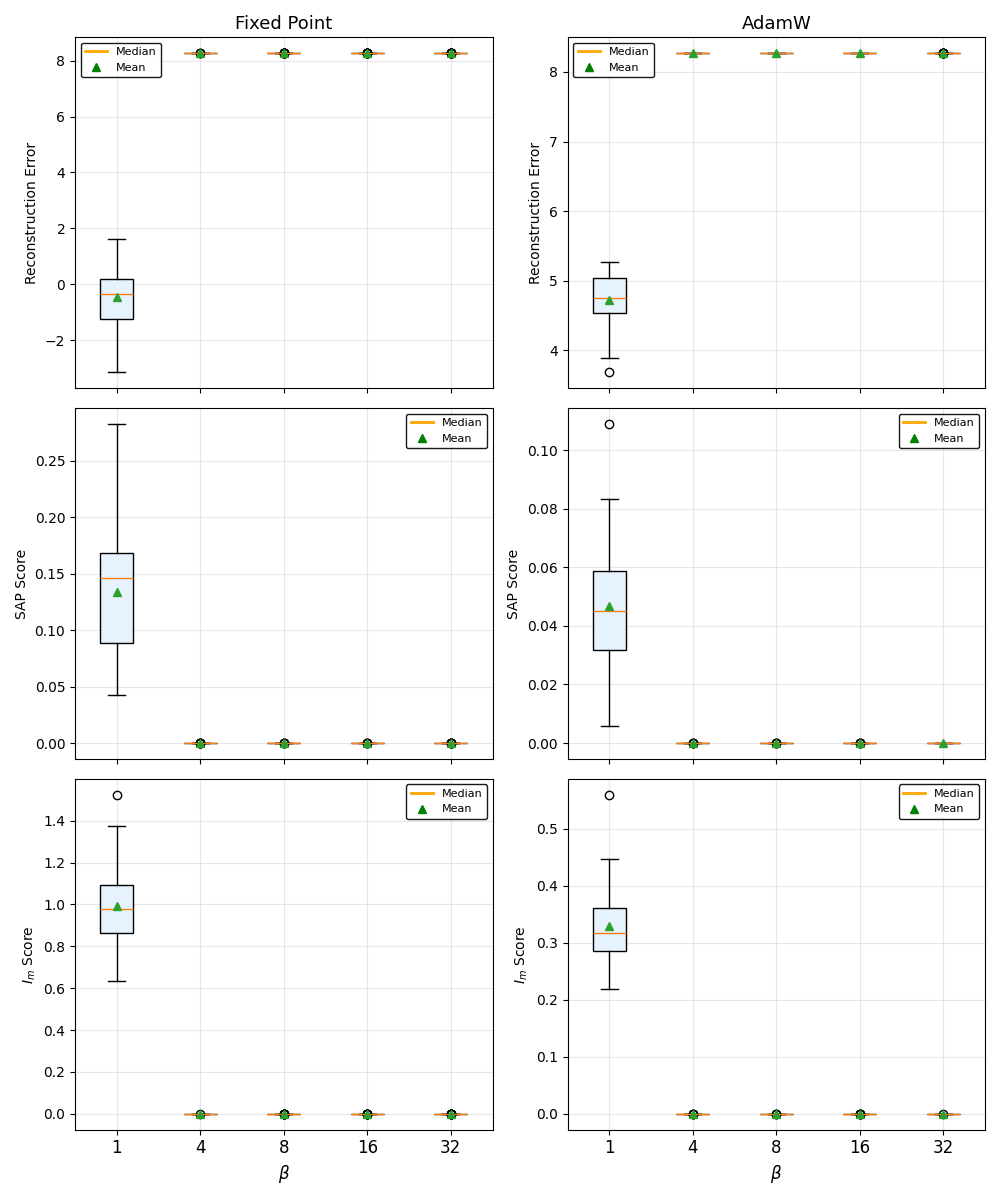}
\caption{Reconstruction error, SAP score, and
$I_m$ score for the $\beta$-VAE with $(n,m,s)=(100,10,5)$, using
fixed-point iteration and AdamW. For $\beta>1$, SAP and $I_m$ collapse toward
zero across both optimization procedures, consistent with convergence to the
trivial low-information solution in Theorem~\ref{thm:trivial}.}
\label{fig:lbeta_metrics}
\end{figure}

\begin{figure}[htb]
\centering
\includegraphics[width=\columnwidth]{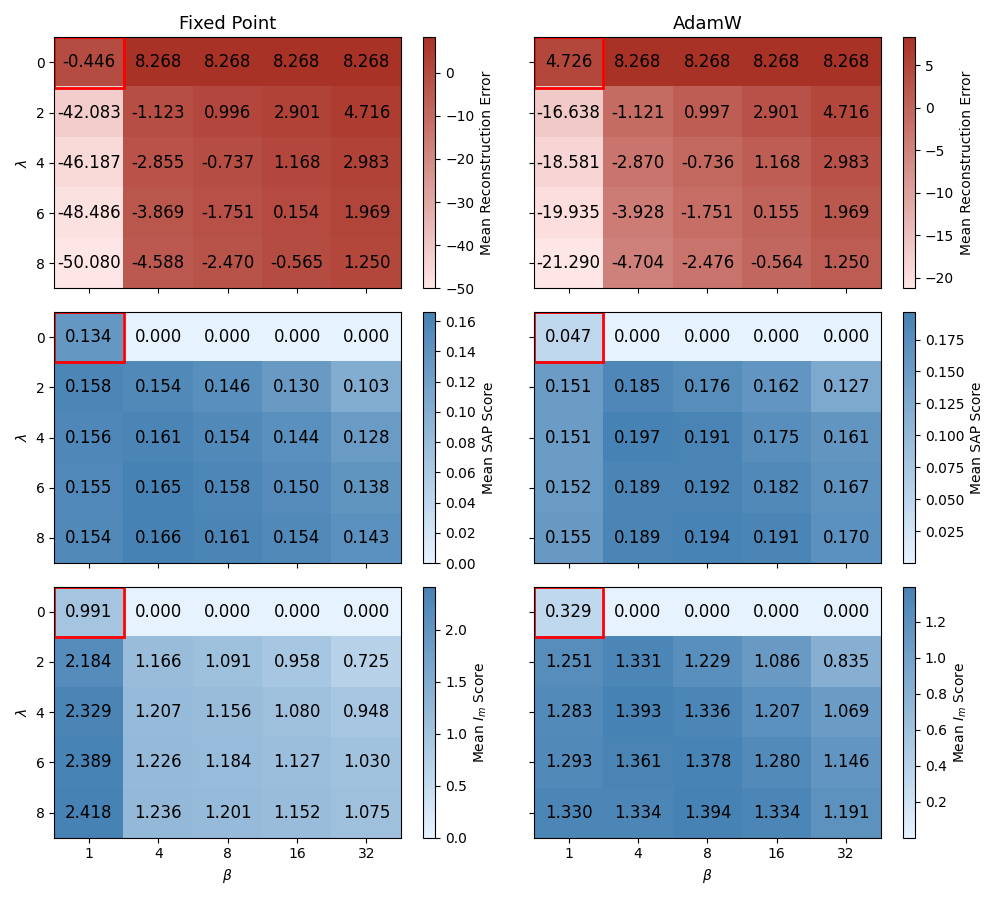}
\caption{Mean reconstruction error, SAP score,
and $I_m$ score for the $\lambda\beta$-VAE with $(n,m,s)=(100,10,5)$. Positive
$\lambda$ prevents the collapse observed at $\lambda=0$ and preserves informative
latent-factor dependence at larger $\beta$, yielding non-degenerate SAP and
$I_m$.}
\label{fig:llambdabeta_metrics}
\end{figure}

\begin{table}[htb]
\centering
\small
\caption{VAE architecture used in nonlinear experiments.}
\label{tab:vae_architecture}
\begin{tabular}{@{}l@{}}
\toprule
\textbf{Encoder} \\
\midrule
Input: $64 \times 64 \times C$ \\
Conv $(4\times4,32)$, stride 2, ReLU \\
Conv $(4\times4,32)$, stride 2, ReLU \\
Conv $(4\times4,64)$, stride 2, ReLU \\
Conv $(4\times4,64)$, stride 2, ReLU \\
FC 256, ReLU \\
FC $2m$ (mean and log-variance) \\
\midrule
\textbf{Decoder} \\
\midrule
Input: $\mathbb{R}^{m}$ \\
FC 256, ReLU \\
FC $4 \times 4 \times 64$, ReLU \\
ConvT $(4\times4,64)$, stride 2, ReLU \\
ConvT $(4\times4,32)$, stride 2, ReLU \\
ConvT $(4\times4,32)$, stride 2, ReLU \\
ConvT $(4\times4,C)$, stride 2\\
\bottomrule
\end{tabular}
\end{table}

\begin{table}[htb]
\centering
\small
\caption{Training hyperparameters for nonlinear experiments.}
\label{tab:training_params}
\begin{tabular}{@{}ll@{}}
\toprule
\textbf{Parameter} & \textbf{Value} \\
\midrule
Optimizer & Adam \\
Learning rate & $10^{-4}$ \\
Batch size & 64 \\
$\beta$ grid & $\{1,4,8,16,32,64,128,256\}$ \\
$\lambda$ grid & $\{0,4,8,16,32\}$ \\
$\beta$-VAE training & $150{,}000$ steps \\
$\lambda\beta$-VAE finetune & $50{,}000$ steps \\
Seeds & 10 \\
Latent dimension & $m=15$ \\
\bottomrule
\end{tabular}
\end{table}

\begin{figure*}[htb]
\centering
\includegraphics[width=0.8\textwidth]{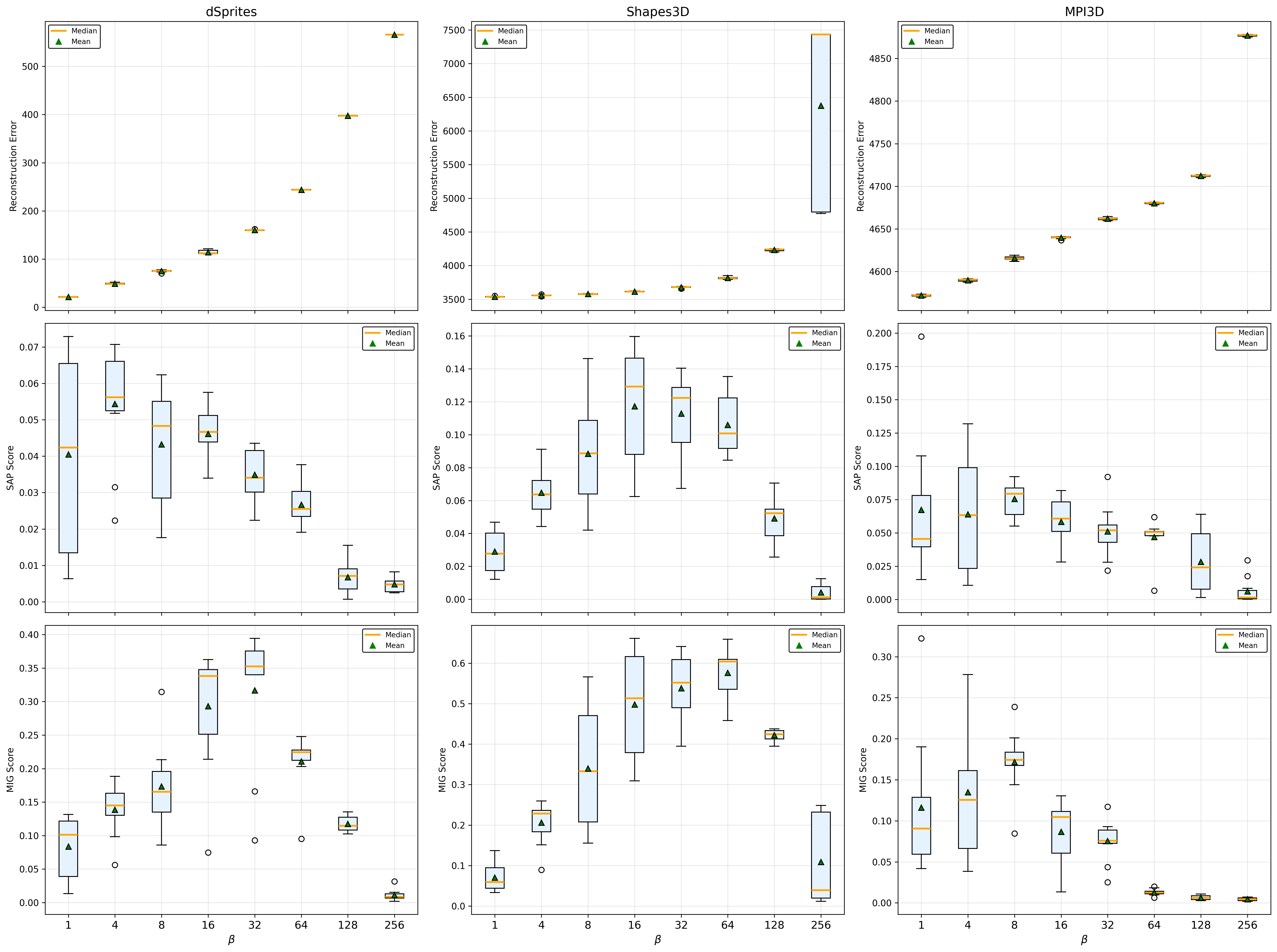}
\caption{Reconstruction error, SAP score, and MIG score for the $\beta$-VAE
across dSprites, Shapes3D, and MPI3D-real.
Disentanglement peaks at intermediate $\beta$ values and deteriorates at large
$\beta$.}
\label{fig:boxplots_across_datasets}
\end{figure*}

\begin{figure*}[htb]
\centering
\includegraphics[width=0.9\textwidth]{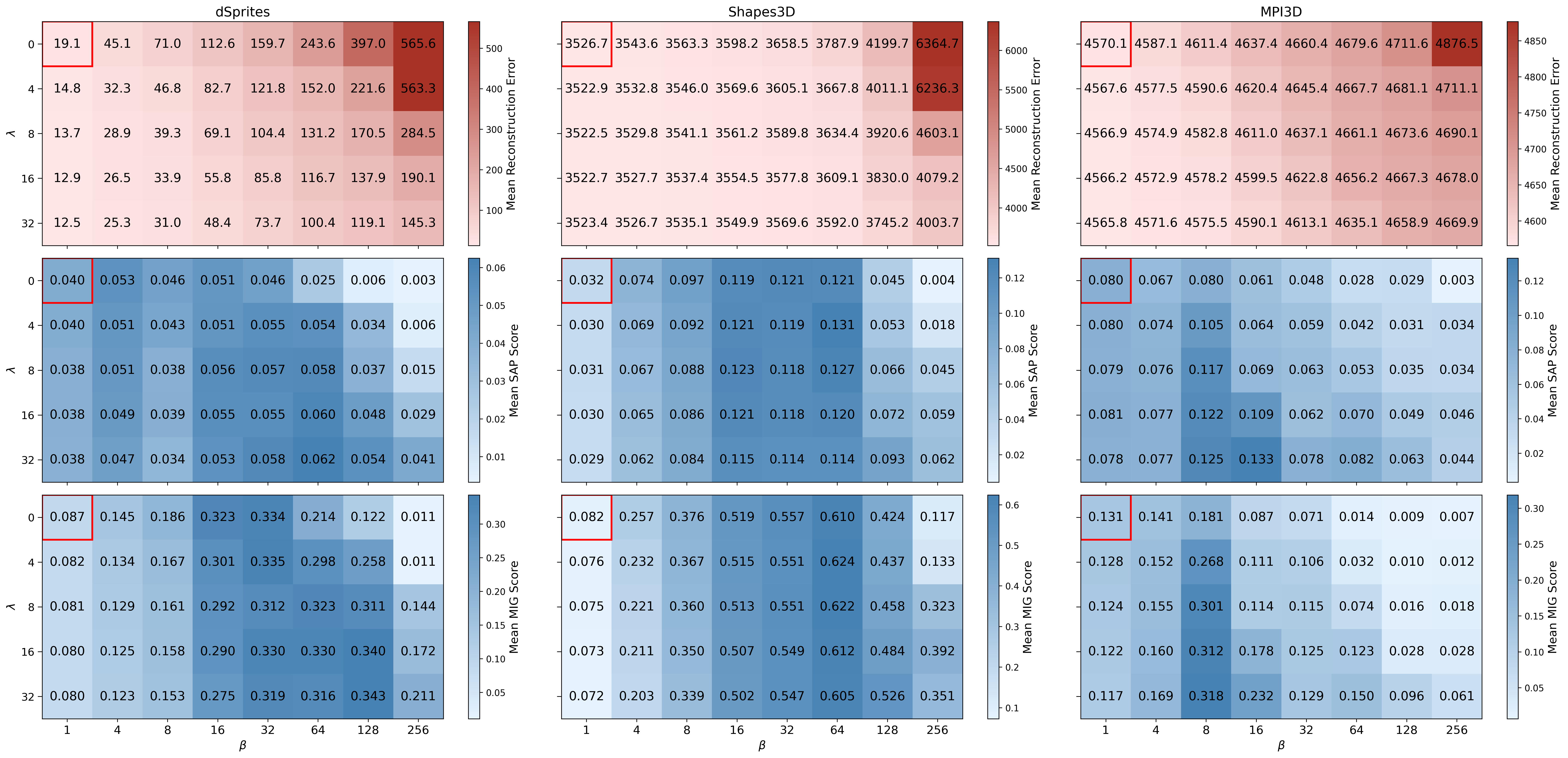}
\caption{Mean reconstruction error, SAP, and MIG for the $\lambda\beta$-VAE across
dSprites, Shapes3D, and MPI3D-real.
Positive $\lambda$ improves reconstruction and preserves disentanglement at
large $\beta$, consistent with information preservation.}
\label{fig:heatmaps_across_datasets}
\end{figure*}

\section{Empirical Evaluation}
\label{sec:experiments}

We evaluate the $\lambda\beta$-VAE framework through two complementary studies: (i) controlled linear-Gaussian simulations to verify the stationarity-induced phase transitions derived in Section~\ref{sec:collapse}, and (ii) large-scale visual benchmarks to test the robustness of informational restoration in deep convolutional architectures.

\subsection{Mechanistic Verification: Linear-Gaussian Regime}
\label{sec:numerics_linear}

We first validate the theoretical existence of the trivial solution for $\beta > 1$ using a high-dimensional configuration ($n=100$, $m=10$, $s=5$). Factors $\mathbf{V}$ are sampled i.i.d.\ from $\mathcal{U}(0.1, 1.0)$ with observation noise $\sigma^2 = 0.05$. To isolate the objective's properties from optimizer bias, we compare two distinct dynamics: \textbf{Analytical Fixed-Point Iteration} of the updates in Lemma~\ref{lem:beta_opt} and \textbf{Stochastic Optimization} (AdamW) with Cholesky-parameterized covariances. For each $(\beta, \lambda)$ pair, we conduct 50 independent trials with random initialization to ensure statistical rigor.\footnote{Code repository: \href{https://github.com/mh-vu/lambda-beta-vae}{https://github.com/mh-vu/lambda-beta-vae}}

\paragraph{Results.} As predicted by Theorem~\ref{thm:trivial}, the baseline $\beta$-VAE ($\lambda=0$) exhibits an abrupt phase transition (Fig.~\ref{fig:lbeta_metrics}). For $\beta > 1$, both analytical and stochastic trajectories reliably converge to the trivial solution, where the encoder gain $\mathbf{B}$ contracts to zero and the SAP and $I_m$ scores vanish. Conversely, introducing $\lambda > 0$ fundamentally alters the stationarity conditions (Lemma~\ref{lem:lbeta_opt}), counteracting this spectral contraction (Fig.~\ref{fig:llambdabeta_metrics}). Under the $\lambda\beta$-VAE objective, the latent-factor coupling $\mathbf{\Sigma}_{\mathbf{XV}}$ remains preserved even under high regularization pressure ($\beta=32$), validating $\lambda$ as a principled mechanism for widening the stable operating regime of informative representations.

\subsection{Diagnostic Validation: Deep Nonlinear Architectures}
\label{sec:nonlinear}

We extend our evaluation to a hierarchy of visual complexity: dSprites, Shapes3D, and MPI3D-real. The latter serves as a critical test for reality transfer, where realistic visual correlations and complex backgrounds typically accelerate informational decay in over-regularized regimes \cite{schott2022visualrepresentationlearningdoes}.

\paragraph{Architecture and Continuation Protocol.} 
We utilize a deep convolutional VAE with a 15-dimensional latent space, implementing a Gaussian encoder and a Bernoulli decoder. Detailed architectural specifications and training hyperparameters are provided in Tables~\ref{tab:vae_architecture} and~\ref{tab:training_params}. To evaluate the efficacy of the $\lambda$ intervention, we employ a two-phase training protocol:

\begin{enumerate}[leftmargin=*]
    \item \textbf{Stabilization Phase:} Models are first trained as standard $\beta$-VAEs ($\lambda=0$) for $150{,}000$ steps across the $\beta$-grid. This phase allows the latent space to stabilize and attempt factorization under standard regularization pressure before any structural intervention is applied.
    \item \textbf{Restoration Phase:} We introduce the $\lambda$ parameter and continue training for an additional $50{,}000$ steps using the $\lambda\beta$-VAE objective. 
\end{enumerate}
This sequential protocol prevents the auxiliary penalty from interfering with early-stage disentanglement dynamics while specifically testing whether $\lambda$ can recover semantic informativeness in representations that have succumbed to high-$\beta$ informational collapse.

\paragraph{Empirical Findings.} 
As shown in Fig.~\ref{fig:boxplots_across_datasets}, all datasets exhibit the predicted high-$\beta$ degradation. While the transition in deep architectures is more gradual than in the linear case, the deterioration of SAP and MIG scores confirms that extreme KL pressure erases factor-relevant information. This effect is most pronounced in MPI3D-real, consistent with recent findings that increased dataset complexity lowers the threshold for posterior collapse \cite{ichikawa2025high}.

Heatmap analysis (Fig.~\ref{fig:heatmaps_across_datasets}) confirms that $\lambda$ provides a robust restorative effect. For models approaching the collapse regime at high $\beta$, introducing $\lambda > 0$ significantly stabilizes disentanglement metrics and improves overall scores while simultaneously enhancing reconstruction fidelity. This confirms that $\lambda$ serves as a structural control for latent informativeness, preventing the ``oversmoothing'' typically observed in over-regularized generative backbones \cite{takida2022preventingoversmoothingvaegeneralized}.

\begin{figure*}[htb]
\centering
\includegraphics[width=0.9\textwidth]{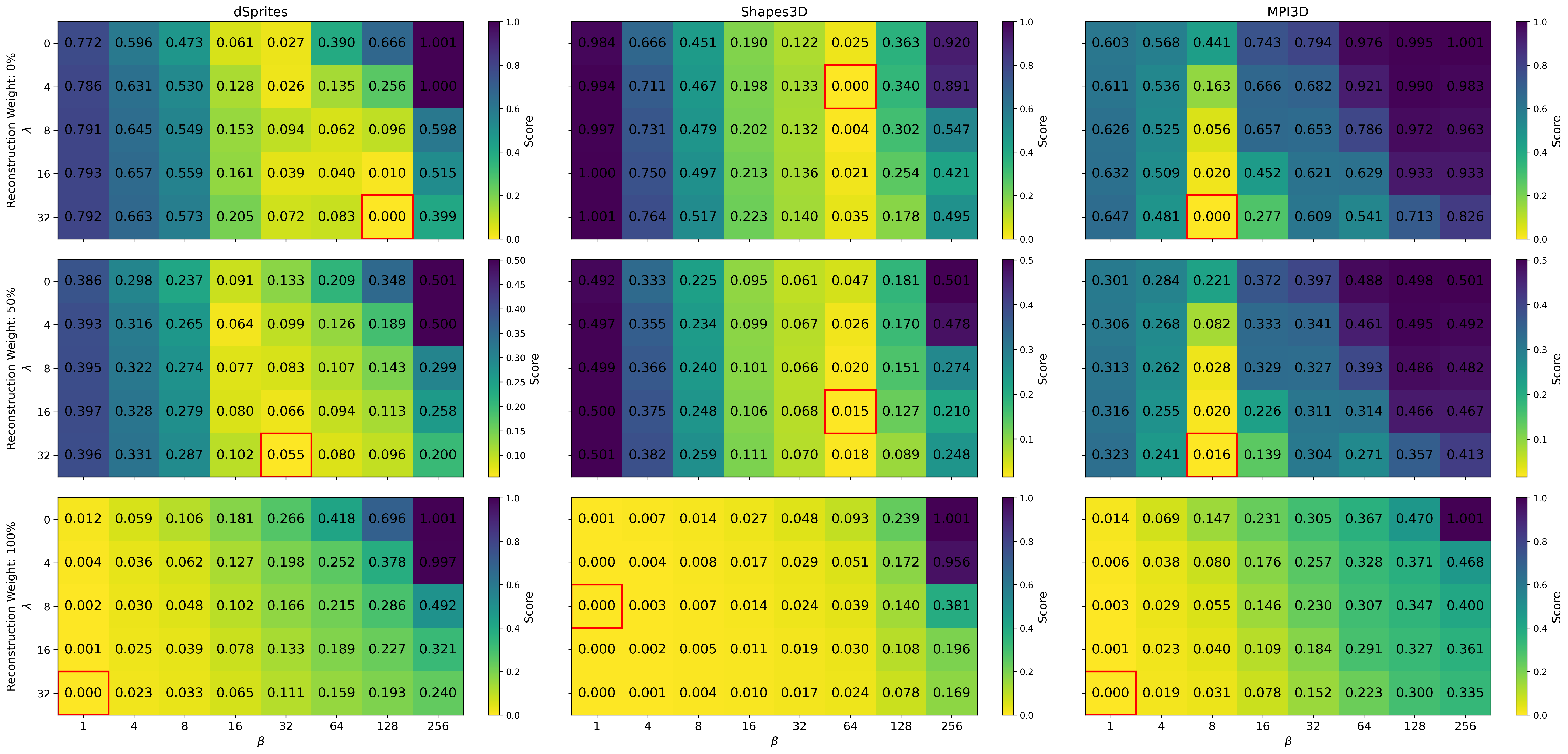}
\caption{Augmented Tchebycheff scalarization heatmaps for three preference
profiles. Red markers indicate the
minimizing $(\beta,\lambda)$ configurations over the evaluated grid.}
\label{fig:tchebycheff_heatmaps}
\end{figure*}

\section{Hyperparameter Selection Strategy}
\label{sec:hyperparam_selection}

Practical deployment of the $\lambda\beta$-VAE requires a principled method for selecting an optimal operating point within the hyperparameter space. We formalize this as a multi-objective optimization problem, identifying configurations $(\beta, \lambda) \in \mathcal{G}$ that satisfy the competing demands of reconstruction fidelity and representation quality. Specifically, we seek to minimize the objective vector $[f_1, f_2]^\top$, where $f_1$ represents the reconstruction error (NLL) and $f_2$ represents the \textit{entanglement score}, defined as $1 - \text{MIG}$.

\paragraph{Augmented Tchebycheff Scalarization.}
To navigate this trade-off, we employ the augmented Tchebycheff scalarization \cite{Steuer1983AnIW, miettinen1999nonlinear, chugh2019scalarizingfunctionsbayesianmultiobjective}. This approach is particularly suited for the non-convex Pareto fronts characteristic of VAE evaluation landscapes. We first map each objective $f_i$ to a normalized score $\bar{f}_i \in [0, 1]$ over the evaluation grid $\mathcal{G}$:
\begin{equation*}
    \bar{f}_i(\beta, \lambda) = \frac{f_i(\beta, \lambda) - \min\limits_{\mathcal{G}} f_i}{\max\limits_{\mathcal{G}} f_i - \min\limits_{\mathcal{G}} f_i}, \quad i \in \{1, 2\}.
\end{equation*}
Given preference weights $w_1, w_2 > 0$ with $\sum w_i = 1$, the scalarized selection score $\mathcal{S}$ is defined as:
\begin{equation*}
\mathcal{S}(\beta,\lambda; \mathbf{w}) = \max_{i \in \{1,2\}} \{w_i \bar{f}_i(\beta,\lambda)\} + \rho \sum_{i=1}^2 w_i \bar{f}_i(\beta,\lambda),
\end{equation*}
where $\rho = 10^{-3}$ is a small augmentation parameter ensuring strict Pareto optimality. By varying the weight vector $\mathbf{w}$, we can trace the Pareto front and select operating points that prioritize either data fidelity ($w_1$) or latent independence ($w_2$).

\paragraph{Interactive Selection and Trade-offs.}
Fig.~\ref{fig:tchebycheff_heatmaps} visualizes the selection scores across three representative preference profiles ($w_1 \in \{0, 0.5, 1\}$). As the priority shifts from reconstruction ($w_1=1$) to disentanglement ($w_1=0$), the optimal configuration migrates from low-$\beta$ to high-$\beta$ regimes. Critically, our numerical results demonstrate that for any fixed $\beta$ in the over-regularized regime, selecting a positive $\lambda$ recovers significant reconstruction fidelity without compromising the disentanglement score. This enables an interactive selection process where, by adjusting the weight vector $\mathbf{w}$, we can identify the specific $(\beta, \lambda)$ pair that best satisfies project-specific requirements for data fidelity and informative representation.

\section{Conclusion}
\label{sec:conclusion}

This paper identifies and resolves a foundational pathology in variational representation learning: the information-theoretic collapse of the latent channel under regularization. We have demonstrated that the non-monotonic performance observed in \(\beta\)-VAEs is a mathematically predictable consequence of spectral contraction. By formalizing this mechanism in a linear-Gaussian regime, we proved that for \(\beta > 1\), the stationarity-induced dynamics of the encoder drive latent-factor mutual information to zero. This diagnosis provides a rigorous explanation for the degeneracy of established metrics such as MIG and SAP, which, while intended to quantify factorization, fundamentally require a non-collapsed information substrate to remain valid.

The introduction of the \(\lambda\beta\)-VAE provides a principled remedy by decoupling the pressure for latent independence from the preservation of semantic signal. Our analytical proofs and cross-dataset validation on dSprites, Shapes3D, and MPI3D-real confirm that the auxiliary reconstruction weight \(\lambda\) acts as a spectral damping mechanism, counteracting informational collapse and significantly widening the stable operating regime for informative representations. While ensuring latent informativeness is a prerequisite for valid evaluation, we acknowledge that preserving the signal does not, by itself, resolve the fundamental non-identifiability of unsupervised disentanglement \cite{locatello2019challengingcommonassumptionsunsupervised}, as a non-collapsed latent channel may still lack the specific inductive biases required to align with ground-truth generative factors. Instead, it ensures the latent space maintains the statistical coupling necessary for any downstream inductive bias or causal discovery mechanism to act upon \cite{schölkopf2021causalrepresentationlearning, wang2024disentangledrepresentationlearning}.

Our results suggest several high-impact avenues for future research. First, the static hyperparameter selection utilized here could be evolved into adaptive \(\lambda\)-scheduling, where reconstruction pressure is dynamically adjusted via control-theory principles \cite{shao2020controlvae} to maintain a target mutual information. Second, the \(\lambda\beta\)-VAE objective should be investigated within hybrid generative backbones, such as Latent Diffusion Models \cite{rombach2022ldm}, where the informational integrity of the VAE-compressed latent space directly dictates the precision of downstream controllability. Finally, extending our spectral contraction analysis to non-Gaussian priors and discrete-continuous latent hybrids \cite{takida2022sqvaevariationalbayesdiscrete, cho2023hyperbolicvaelatentgaussian, pynadath2025candihybriddiscretecontinuousdiffusion} will be essential for developing a unified theory of informativeness across diverse data distributions.

By providing a mechanistic account of collapse and a robust tool for its prevention, this work establishes a clear design principle: \(\beta\) should modulate latent factorization, while a decoupled mechanism such as \(\lambda\) must safeguard informational integrity. This two-parameter control surface is a critical requirement for developing stable, interpretable, and valid representation learning models for complex scientific data structures and generative tasks alike.

\bibliography{paper}
\bibliographystyle{icml2025}

\newpage
\appendix
\onecolumn
\section{Proof of Theorem~\ref{thm:trivial}}
\label{app:collapse_proof}

This appendix provides the formal derivation of the informational collapse result for the $\beta$-VAE in a linear-Gaussian regime. Throughout, $\|\cdot\|_2$ denotes the spectral norm (the largest singular value).

\subsection{Uniform Spectral Bound on Encoder Covariance}

We establish a uniform bound on the encoder noise covariance in the Loewner order. This result is a necessary prerequisite to obtain $\|\mathbf{\Sigma}_{\mathbf{W}}\|_2 \le 1$, ensuring the convergence of the recursive gain dynamics in the proof of Theorem~\ref{thm:trivial}.

\begin{lemma}[Uniform Spectral Bound]
\label{lem:cov_bound}
For all iterations $t \ge 0$, the encoder noise covariance $\mathbf{\Sigma}_{\mathbf{W}}^{(t)}$ satisfies $\|\mathbf{\Sigma}_{\mathbf{W}}^{(t)}\|_2 \le 1$.
\end{lemma}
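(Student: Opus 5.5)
The plan is to read the bound directly off the encoder-covariance update in Lemma~\ref{lem:beta_opt}, viewed as the fixed-point iteration
\[
\mathbf{\Sigma}_{\mathbf{W}}^{(t+1)} = \bigl[\mathbf{I}_m + \mathbf{A}^{(t)\top}(\mathbf{\Sigma}_{\mathbf{Z}}^{(t)})^{-1}\mathbf{A}^{(t)}/\beta\bigr]^{-1},
\]
with the initialization satisfying $\mathbf{0}\prec\mathbf{\Sigma}_{\mathbf{W}}^{(0)}\preceq\mathbf{I}_m$. If the initialization is arbitrary, the claim will be proved for $t\ge 1$ and the $t=0$ case taken as an assumption on the initializer, for instance the prior covariance $\mathbf{I}_m$.

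The argument proceeds by induction on $t$, carrying the invariant $\mathbf{\Sigma}_{\mathbf{W}}^{(t)}\succ\mathbf{0}$ and $\mathbf{\Sigma}_{\mathbf{Z}}^{(t)}\succ\mathbf{0}$.

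The first step checks that $\mathbf{\Sigma}_{\mathbf{Z}}^{(t)}$ stays positive definite. Its update is $(\mathbf{\Sigma}_{\mathbf{Y}}^{-1}+\mathbf{B}^\top\mathbf{\Sigma}_{\mathbf{W}}^{-1}\mathbf{B})^{-1}$. Here $\mathbf{\Sigma}_{\mathbf{Y}}^{-1}\succ\mathbf{0}$ because $\sigma^2>0$. The term $\mathbf{B}^\top\mathbf{\Sigma}_{\mathbf{W}}^{-1}\mathbf{B}$ is positive semidefinite by the inductive hypothesis. Hence the bracket is positive definite, and so is its inverse. In fact this also gives $\mathbf{\Sigma}_{\mathbf{Z}}\preceq\mathbf{\Sigma}_{\mathbf{Y}}$, though that is not needed here.

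The second step uses $(\mathbf{\Sigma}_{\mathbf{Z}}^{(t)})^{-1}\succ\mathbf{0}$ and $\beta>0$ to conclude that $\mathbf{K}:=\mathbf{A}^\top(\mathbf{\Sigma}_{\mathbf{Z}}^{-1}/\beta)\mathbf{A}$ is positive semidefinite. This holds because $\mathbf{x}^\top\mathbf{K}\mathbf{x}=\beta^{-1}\|\mathbf{\Sigma}_{\mathbf{Z}}^{-1/2}\mathbf{A}\mathbf{x}\|^2\ge0$. It follows that $\mathbf{I}_m+\mathbf{K}\succeq\mathbf{I}_m\succ\mathbf{0}$, so the inverse exists.

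The third step applies operator monotonicity of the matrix inverse on positive definite matrices: if $\mathbf{P}\succeq\mathbf{Q}\succ\mathbf{0}$, then $\mathbf{P}^{-1}\preceq\mathbf{Q}^{-1}$. Taking $\mathbf{P}=\mathbf{I}_m+\mathbf{K}$ and $\mathbf{Q}=\mathbf{I}_m$ gives $\mathbf{0}\prec\mathbf{\Sigma}_{\mathbf{W}}^{(t+1)}\preceq\mathbf{I}_m$. One can also see this without the general fact: $\mathbf{K}$ is symmetric, so diagonalize it as $\mathbf{K}=\mathbf{U}\mathbf{D}\mathbf{U}^\top$ with $\mathbf{D}\succeq\mathbf{0}$. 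Then the eigenvalues of $\mathbf{\Sigma}_{\mathbf{W}}^{(t+1)}$ are $1/(1+d_i)\in(0,1]$. Since $\mathbf{\Sigma}_{\mathbf{W}}$ is symmetric positive definite, its spectral norm equals its largest eigenvalue, which is at most $1$. This closes the induction and yields $\|\mathbf{\Sigma}_{\mathbf{W}}^{(t)}\|_2\le1$.

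The only potential obstacle is bookkeeping rather than analysis. One must ensure that the well-posedness of each update, namely invertibility of $\mathbf{\Sigma}_{\mathbf{W}}$ and $\mathbf{\Sigma}_{\mathbf{Z}}$, is carried inside the induction, and that the base case is handled. The eigenvalue argument itself is immediate once positive semidefiniteness of $\mathbf{K}$ is established. The argument also does not depend on $\beta>1$: only $\beta>0$ is needed.
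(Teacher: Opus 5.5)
Your proposal is correct and follows the paper's proof: both show $\mathbf{A}^{(t)\top}(\mathbf{\Sigma}_{\mathbf{Z}}^{(t)})^{-1}\mathbf{A}^{(t)}/\beta\succeq\mathbf{0}$, invert the Loewner inequality $\mathbf{I}_m+\mathbf{A}^{(t)\top}(\mathbf{\Sigma}_{\mathbf{Z}}^{(t)})^{-1}\mathbf{A}^{(t)}/\beta\succeq\mathbf{I}_m$ to get $\mathbf{\Sigma}_{\mathbf{W}}^{(t+1)}\preceq\mathbf{I}_m$, and conclude $\|\mathbf{\Sigma}_{\mathbf{W}}^{(t+1)}\|_2=\lambda_{\max}\le 1$. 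Your additional points---carrying $\mathbf{\Sigma}_{\mathbf{Z}}^{(t)}\succ\mathbf{0}$ through the induction, noting that the update only controls $t\ge 1$ so the $t=0$ case is an assumption on the initializer, and observing that $\beta>0$ suffices---make explicit what the paper's proof leaves implicit.
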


\begin{proof}
From the stationarity conditions in Lemma~\ref{lem:beta_opt}, the update for the encoder noise covariance is:
\begin{equation*}
\mathbf{\Sigma}_{\mathbf{W}}^{(t+1)} = \Bigl[ \mathbf{I}_m + \mathbf{A}^{(t)\top} \bigl(\mathbf{\Sigma}_{\mathbf{Z}}^{(t)-1}/\beta\bigr) \mathbf{A}^{(t)} \Bigr]^{-1}.
\end{equation*}
Since $\mathbf{\Sigma}_{\mathbf{Z}}^{(t)} \succ \mathbf{0}$ and $\beta > 1$, the term $\mathbf{M} = \mathbf{A}^{(t)\top}(\mathbf{\Sigma}_{\mathbf{Z}}^{(t)-1}/\beta)\mathbf{A}^{(t)}$ is positive semi-definite ($\mathbf{M} \succeq \mathbf{0}$). It follows that $(\mathbf{I}_m + \mathbf{M}) \succeq \mathbf{I}_m$. In the Loewner order, taking the inverse reverses the inequality:
\begin{equation*}
\mathbf{\Sigma}_{\mathbf{W}}^{(t+1)} = (\mathbf{I}_m + \mathbf{M})^{-1} \preceq \mathbf{I}_m.
\end{equation*}
This implies that all eigenvalues $\lambda_i(\mathbf{\Sigma}_{\mathbf{W}}^{(t+1)})$ lie in the interval $(0, 1]$. Consequently, the spectral norm satisfies \[\|\mathbf{\Sigma}_{\mathbf{W}}^{(t+1)}\|_2 = \lambda_{\max}(\mathbf{\Sigma}_{\mathbf{W}}^{(t+1)}) \le 1.\]
\end{proof}

\subsection{Proof of Theorem~\ref{thm:trivial}: Informational Collapse}

\begin{proof}
Let $(\mathbf{B}^{(t)},\mathbf{\Sigma}_{\mathbf{W}}^{(t)})$ denote the encoder parameters at iteration $t$. By substituting the coupled stationarity updates from Lemma~\ref{lem:beta_opt}, we derive the following recursive identity for the encoder gain matrix $\mathbf{B}$ after $n$ steps:
\begin{equation}
\label{eq:b_recursion}
\mathbf{B}^{(t+n)} = \beta^{-n} \mathbf{\Sigma}_{\mathbf{W}}^{(t+n)} \bigl[\mathbf{\Sigma}_{\mathbf{W}}^{(t)}\bigr]^{-1} \mathbf{B}^{(t)}.
\end{equation}

Taking the spectral norm on both sides of Eq.~\eqref{eq:b_recursion} and applying the sub-multiplicative property of the norm along with the uniform bound from Lemma~\ref{lem:cov_bound}:
\begin{equation*}
\|\mathbf{B}^{(t+n)}\|_2 \le \beta^{-n} \|\mathbf{\Sigma}_{\mathbf{W}}^{(t+n)}\|_2 \cdot \bigl\| [\mathbf{\Sigma}_{\mathbf{W}}^{(t)}]^{-1} \mathbf{B}^{(t)} \bigr\|_2 \le \beta^{-n} C_t,
\end{equation*}
where $C_t = \|[\mathbf{\Sigma}_{\mathbf{W}}^{(t)}]^{-1} \mathbf{B}^{(t)}\|_2$ is a constant determined by the state at iteration $t$. For the regularization regime $\beta > 1$, the term $\beta^{-n}$ vanishes exponentially as $n \to \infty$. This forces the spectral norm $\|\mathbf{B}^{(t)}\|_2 \to 0$, identifying the unique stable fixed point as the trivial solution $(\mathbf{A}=\mathbf{0}, \mathbf{B}=\mathbf{0}, \mathbf{\Sigma}_{\mathbf{Z}}=\mathbf{\Sigma}_{\mathbf{Y}},\mathbf{\Sigma}_{\mathbf{W}}=\mathbf{I}_m)$.

To evaluate the informational impact, we examine the joint covariance $\mathbf{\Sigma}_{(\mathbf{X}, \mathbf{V})}$. As established in the generative framework, the cross-covariance is $\mathbf{\Sigma}_{\mathbf{X}\mathbf{V}} = \mathbf{B}\mathbf{\Gamma}\mathbf{\Sigma}_{\mathbf{V}}$. In the limit $\mathbf{B} \to \mathbf{0}$, this block vanishes. The mutual information for this Gaussian system is:
\begin{equation*}
I(\mathbf{X};\mathbf{V}) = \frac{1}{2} \log \frac{\det(\mathbf{\Sigma}_{\mathbf{X}}) \det(\mathbf{\Sigma}_{\mathbf{V}})}{\det(\mathbf{\Sigma}_{(\mathbf{X},\mathbf{V})})}.
\end{equation*}
As $\mathbf{\Sigma}_{\mathbf{X}\mathbf{V}} \to \mathbf{0}$, the joint covariance matrix $\mathbf{\Sigma}_{(\mathbf{X},\mathbf{V})}$ becomes block-diagonal, such that $\det(\mathbf{\Sigma}_{(\mathbf{X},\mathbf{V})}) \to \det(\mathbf{\Sigma}_{\mathbf{X}})\det(\mathbf{\Sigma}_{\mathbf{V}})$. The ratio inside the logarithm approaches unity, driving $I(\mathbf{X};\mathbf{V}) \to 0$. This confirms that the spectral contraction of the encoder gain necessitates a total informational collapse.
\end{proof}

\section{Additional Qualitative Results}

This appendix presents supplementary qualitative results for the three nonlinear
datasets studied in the main text: dSprites, Shapes3D, and MPI3D-real. These
figures are intended to complement the quantitative evaluations reported in
Sections~\ref{sec:numerics_linear} and~\ref{sec:nonlinear} by illustrating typical
reconstruction behavior and latent-factor associations under selected
$(\beta,\lambda)$ configurations.

For each dataset, we show:
(i) representative original images alongside reconstructions, and
(ii) mutual information heatmaps between latent dimensions and ground-truth
generative factors.
All results are shown for a fixed random seed (seed~0) and are illustrative
rather than exhaustive.

\begin{figure*}[htb]
\centering

\begin{minipage}[t]{0.245\textwidth}\centering
  \includegraphics[width=\linewidth]{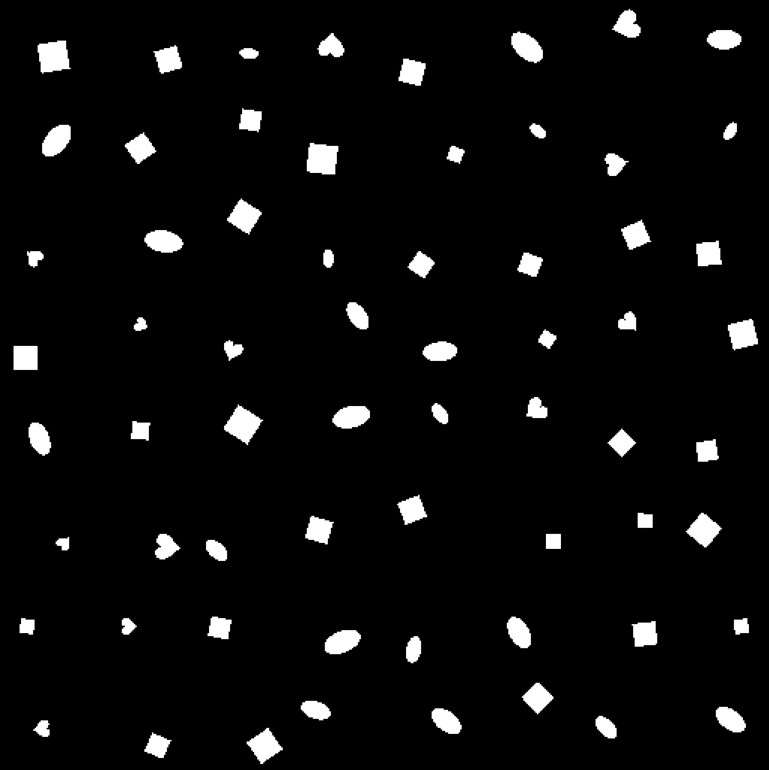}\\[-1mm]
  \small Original
\end{minipage}\hfill
\begin{minipage}[t]{0.245\textwidth}\centering
  \includegraphics[width=\linewidth]{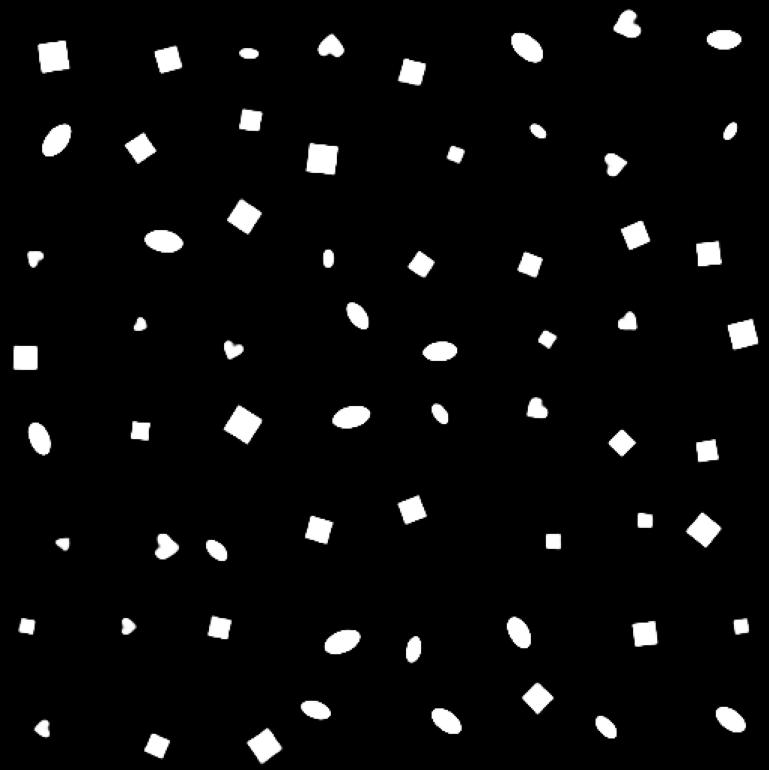}\\[-1mm]
  \small $\beta=1,\ \lambda=0$
\end{minipage}\hfill
\begin{minipage}[t]{0.245\textwidth}\centering
  \includegraphics[width=\linewidth]{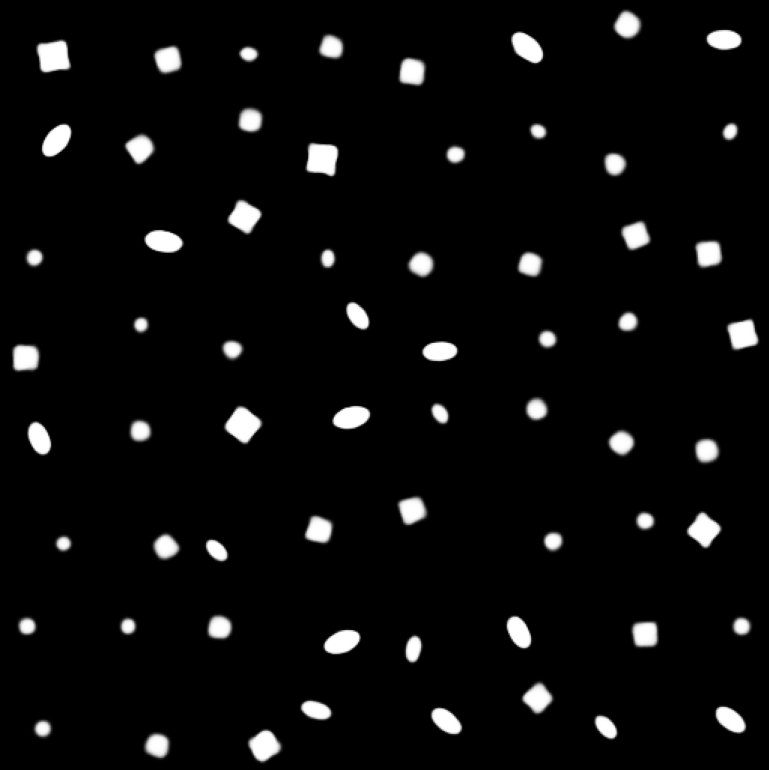}\\[-1mm]
  \small $\beta=8,\ \lambda=0$
\end{minipage}\hfill
\begin{minipage}[t]{0.245\textwidth}\centering
  \includegraphics[width=\linewidth]{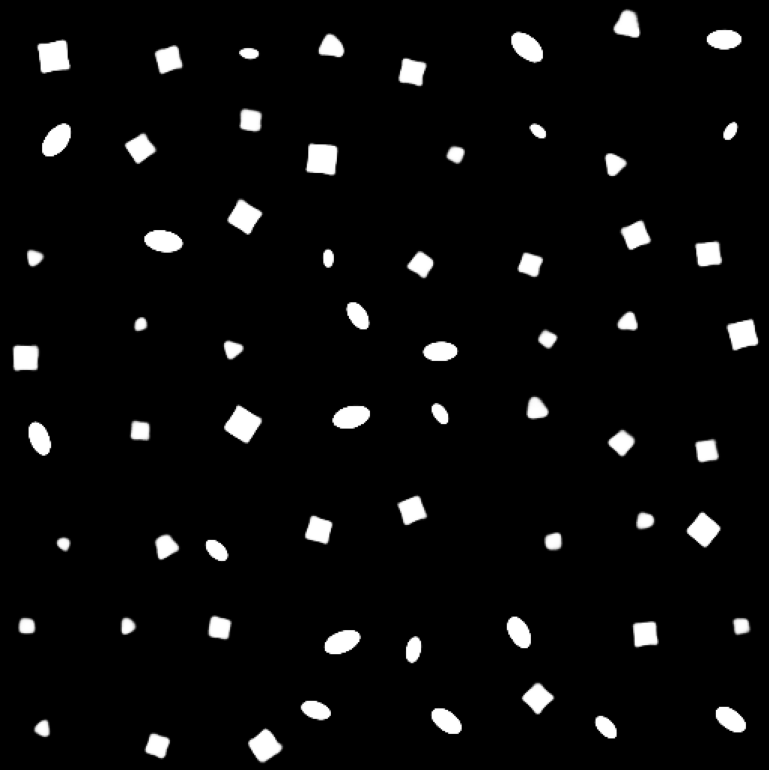}\\[-1mm]
  \small $\beta=8,\ \lambda=32$
\end{minipage}

\vspace{2mm}

\begin{minipage}[t]{0.245\textwidth}\centering
  \includegraphics[width=\linewidth]{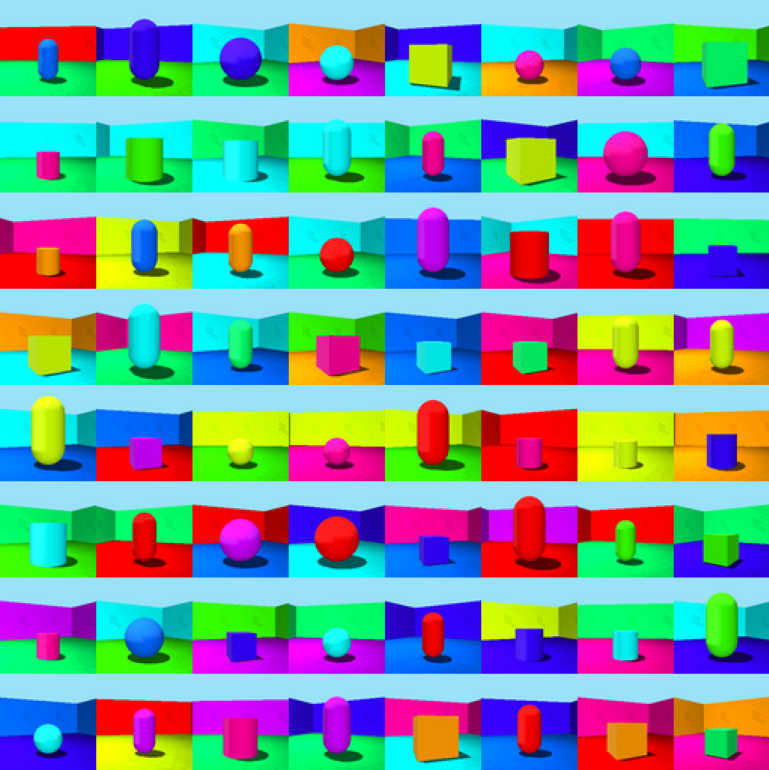}\\[-1mm]
  \small Original
\end{minipage}\hfill
\begin{minipage}[t]{0.245\textwidth}\centering
  \includegraphics[width=\linewidth]{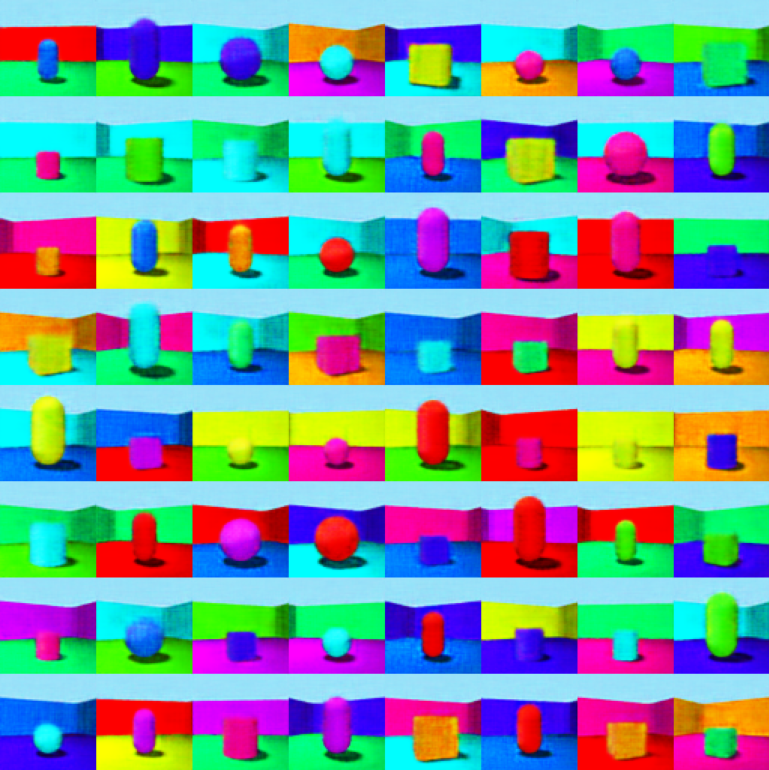}\\[-1mm]
  \small $\beta=1,\ \lambda=0$
\end{minipage}\hfill
\begin{minipage}[t]{0.245\textwidth}\centering
  \includegraphics[width=\linewidth]{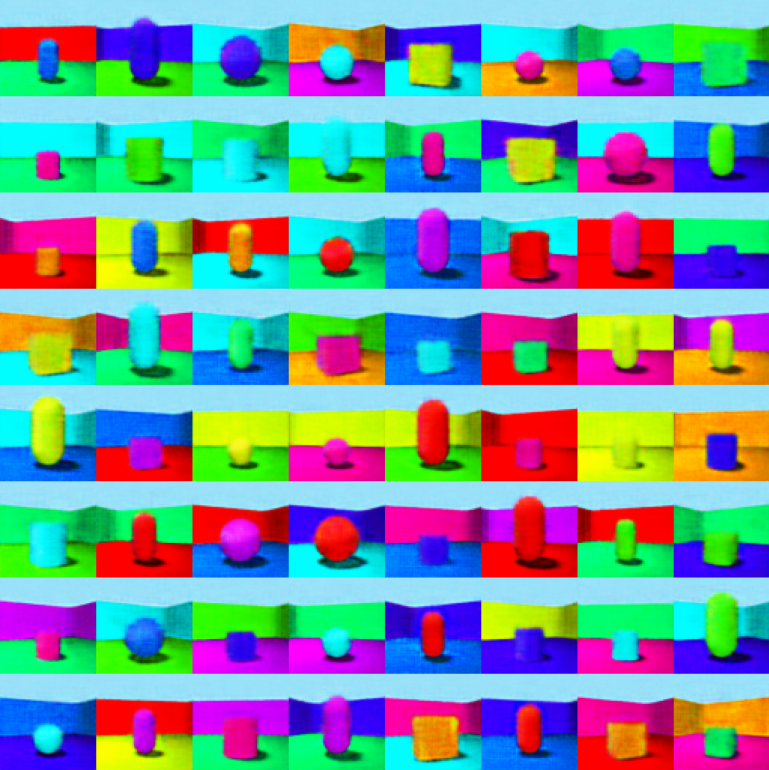}\\[-1mm]
  \small $\beta=8,\ \lambda=0$
\end{minipage}\hfill
\begin{minipage}[t]{0.245\textwidth}\centering
  \includegraphics[width=\linewidth]{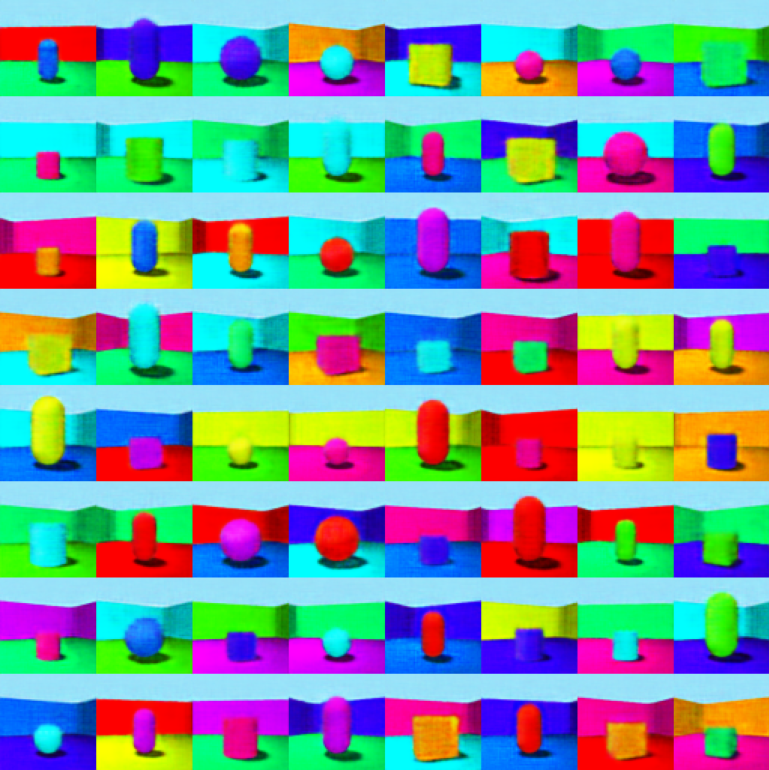}\\[-1mm]
  \small $\beta=8,\ \lambda=32$
\end{minipage}

\vspace{2mm}

\begin{minipage}[t]{0.245\textwidth}\centering
  \includegraphics[width=\linewidth]{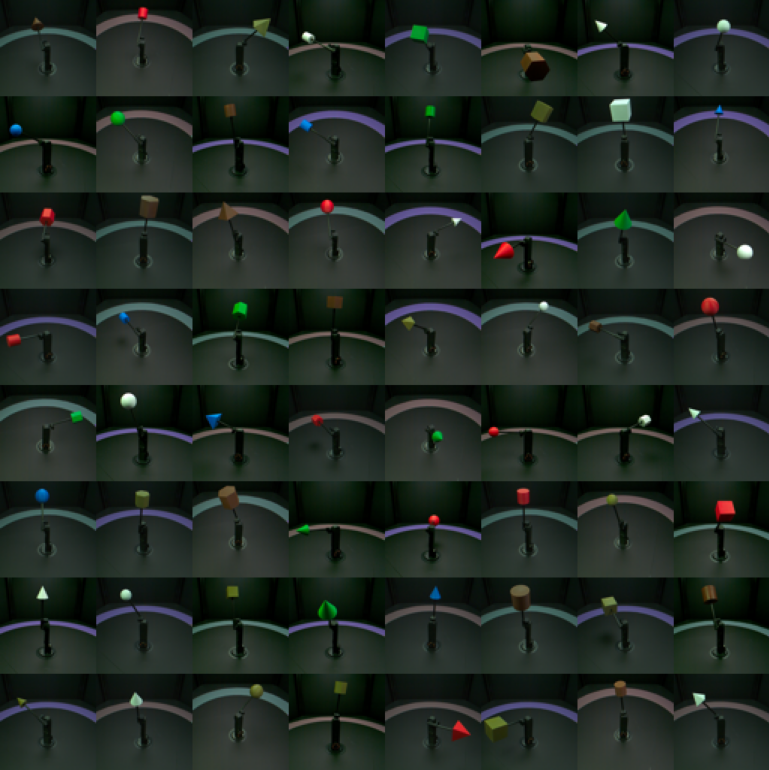}\\[-1mm]
  \small Original
\end{minipage}\hfill
\begin{minipage}[t]{0.245\textwidth}\centering
  \includegraphics[width=\linewidth]{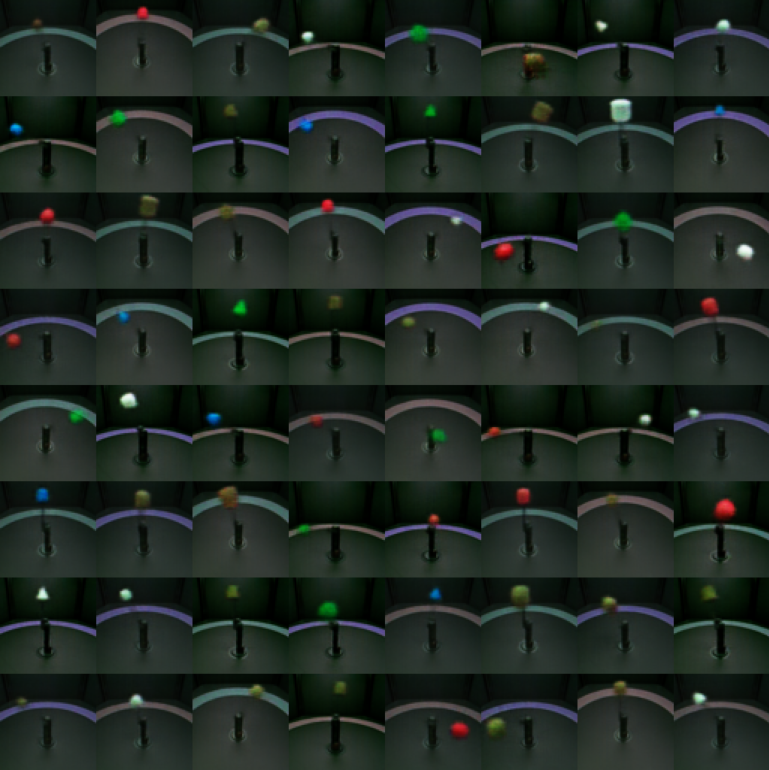}\\[-1mm]
  \small $\beta=1,\ \lambda=0$
\end{minipage}\hfill
\begin{minipage}[t]{0.245\textwidth}\centering
  \includegraphics[width=\linewidth]{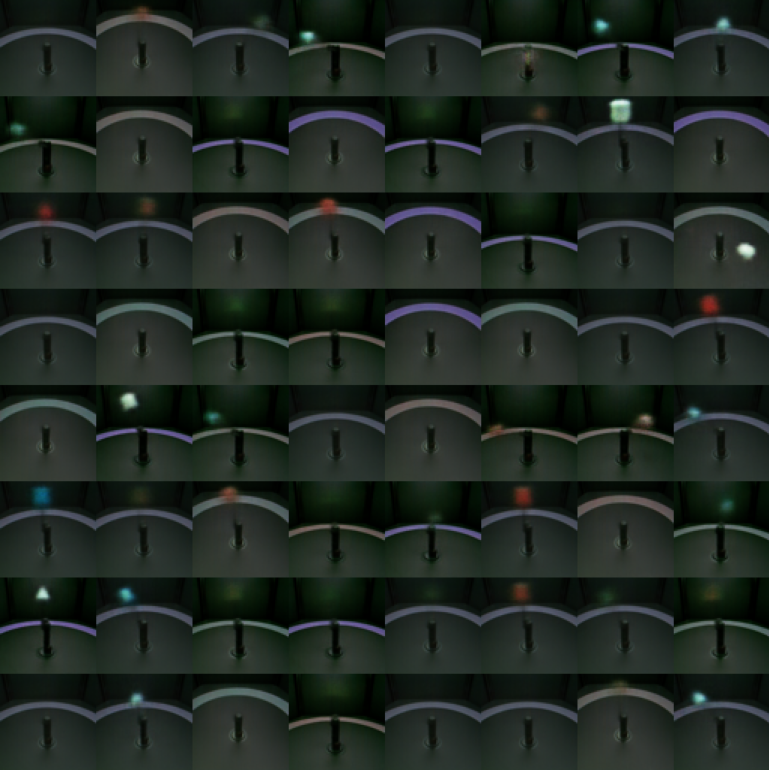}\\[-1mm]
  \small $\beta=8,\ \lambda=0$
\end{minipage}\hfill
\begin{minipage}[t]{0.245\textwidth}\centering
  \includegraphics[width=\linewidth]{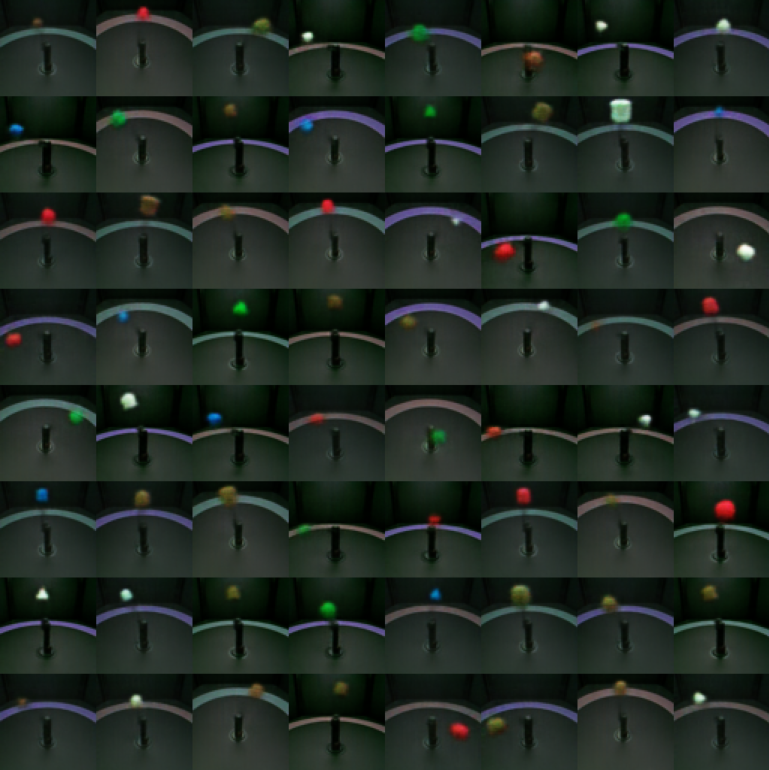}\\[-1mm]
  \small $\beta=8,\ \lambda=32$
\end{minipage}

\caption{\textbf{Qualitative reconstructions (seed 0).}
Each row shows original images and reconstructions under representative
$(\beta,\lambda)$ settings. At fixed $\beta$, introducing $\lambda>0$
substantially improves reconstruction fidelity. At $\lambda=0$, larger $\beta$
increases regularization pressure and can visibly degrade reconstructions.}
\label{fig:appendix_recons_all}
\end{figure*}

\begin{figure*}[htb]
\centering

\begin{minipage}[t]{0.5\textwidth}\centering
  \includegraphics[width=\linewidth]{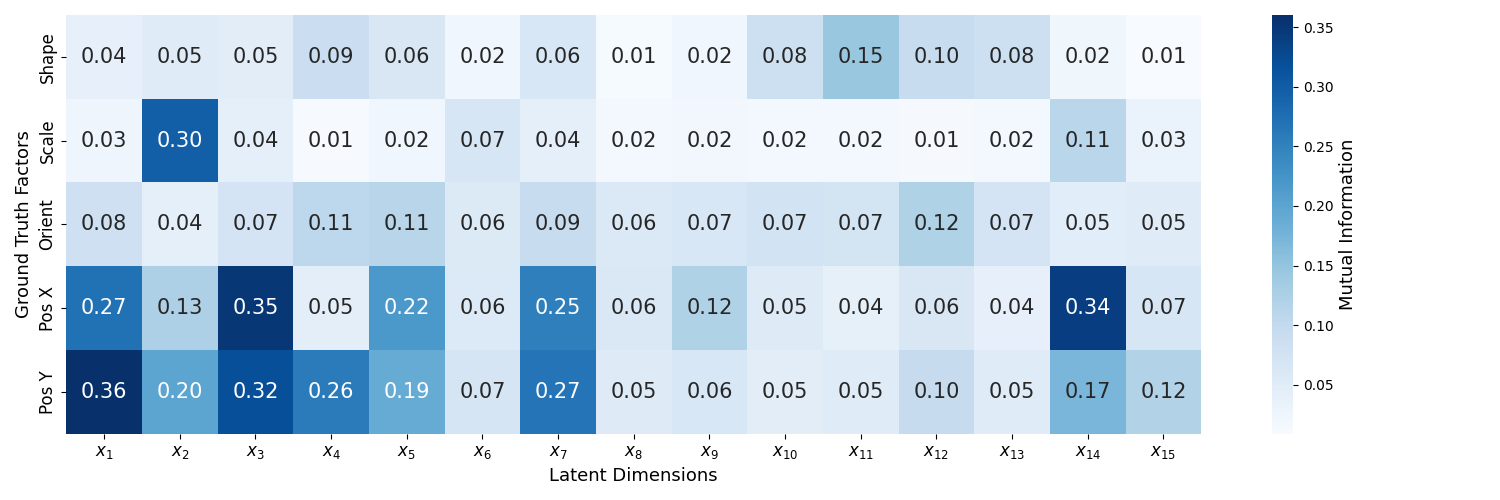}\\[-1mm]
  \small dSprites: $\beta=1,\ \lambda=0$
\end{minipage}\hfill
\begin{minipage}[t]{0.5\textwidth}\centering
  \includegraphics[width=\linewidth]{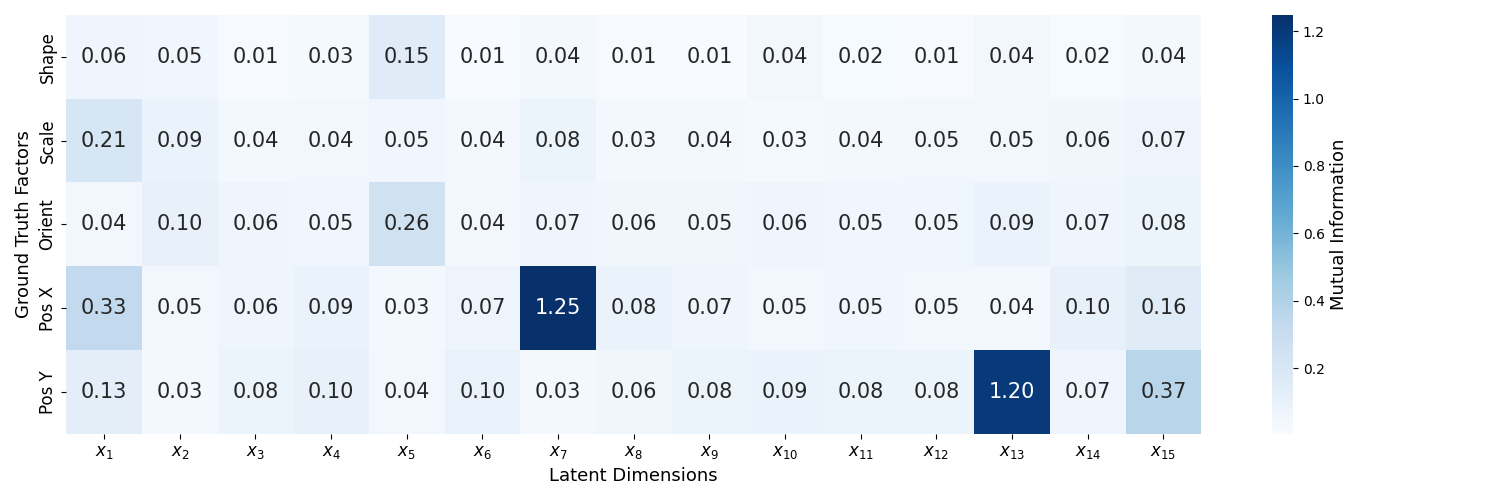}\\[-1mm]
  \small dSprites: $\beta=8,\ \lambda=32$
\end{minipage}

\vspace{2mm}

\begin{minipage}[t]{0.5\textwidth}\centering
  \includegraphics[width=\linewidth]{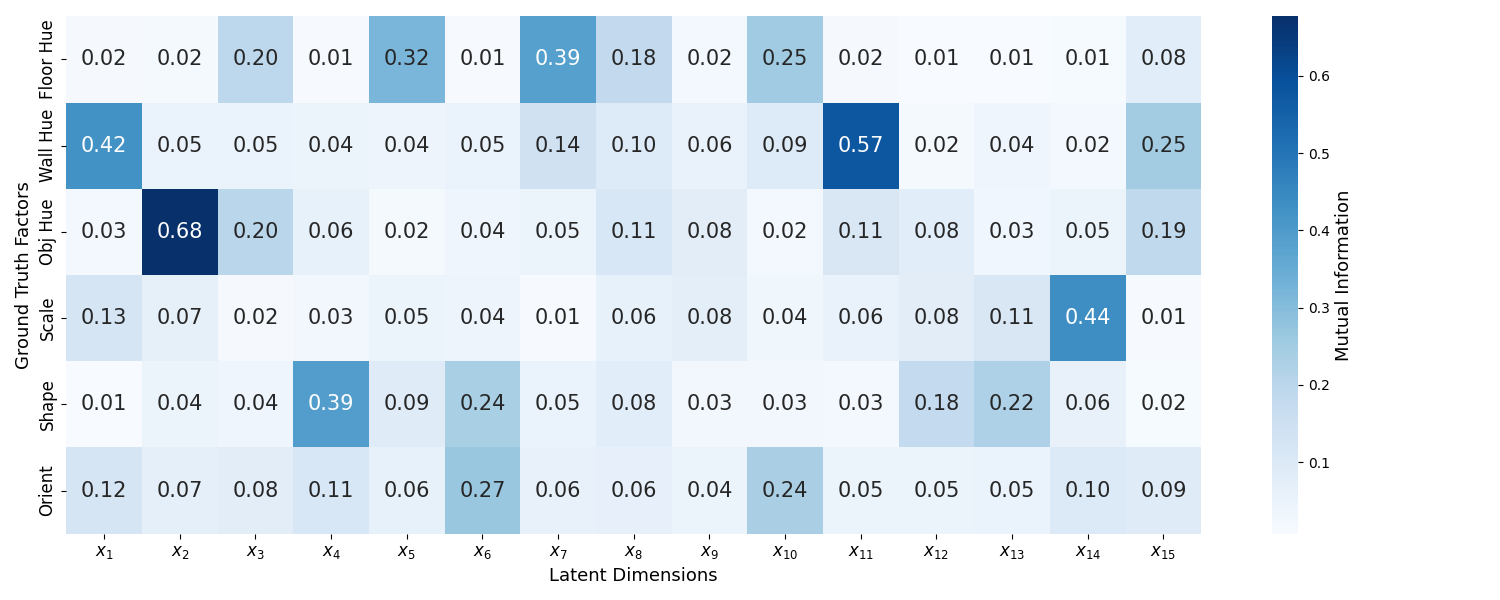}\\[-1mm]
  \small Shapes3D: $\beta=1,\ \lambda=0$
\end{minipage}\hfill
\begin{minipage}[t]{0.5\textwidth}\centering
  \includegraphics[width=\linewidth]{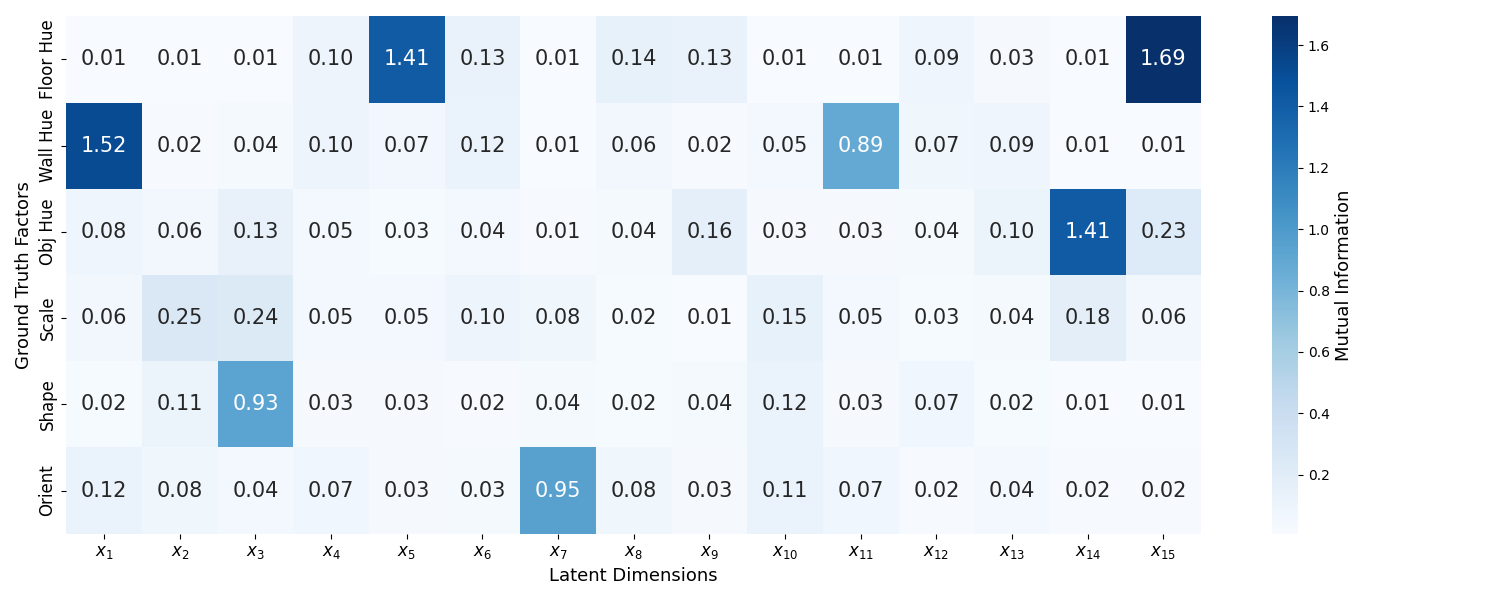}\\[-1mm]
  \small Shapes3D: $\beta=8,\ \lambda=32$
\end{minipage}

\vspace{2mm}

\begin{minipage}[t]{0.5\textwidth}\centering
  \includegraphics[width=\linewidth]{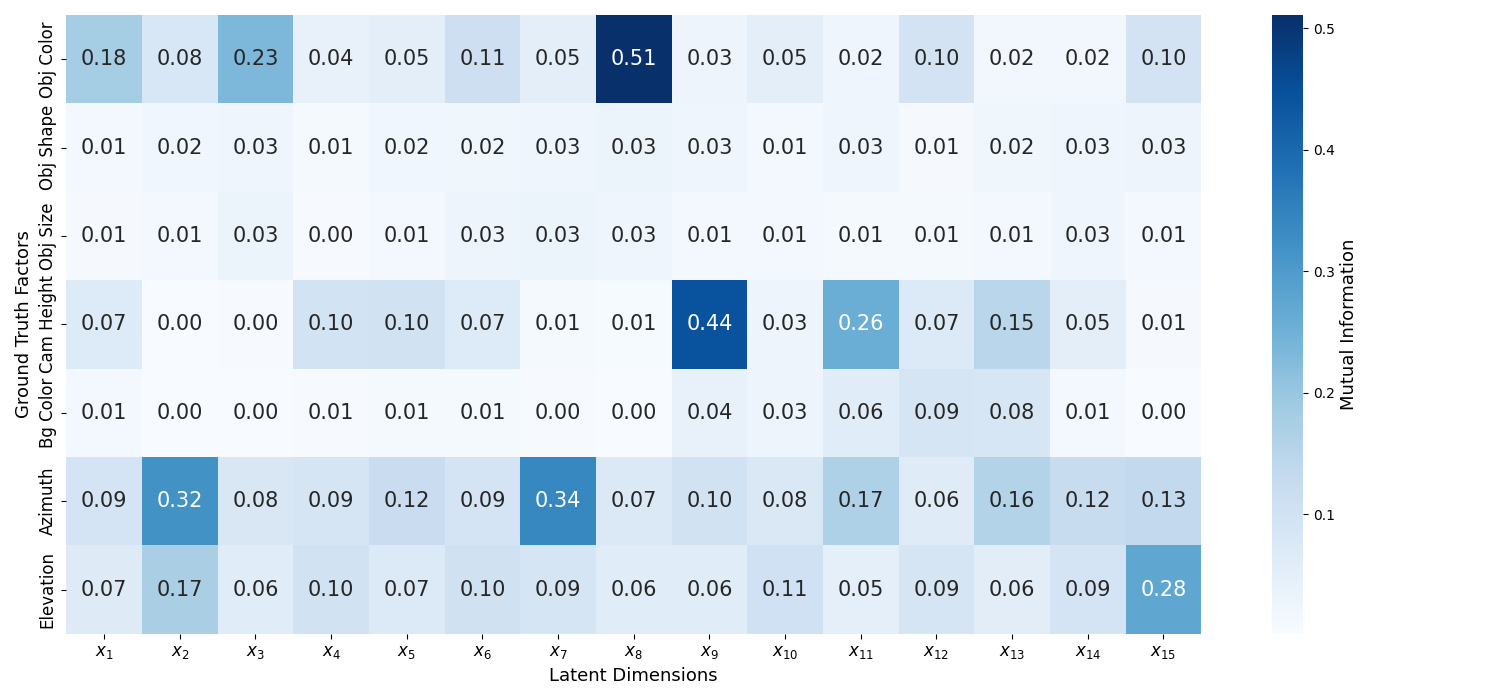}\\[-1mm]
  \small MPI3D-real: $\beta=1,\ \lambda=0$
\end{minipage}\hfill
\begin{minipage}[t]{0.5\textwidth}\centering
  \includegraphics[width=\linewidth]{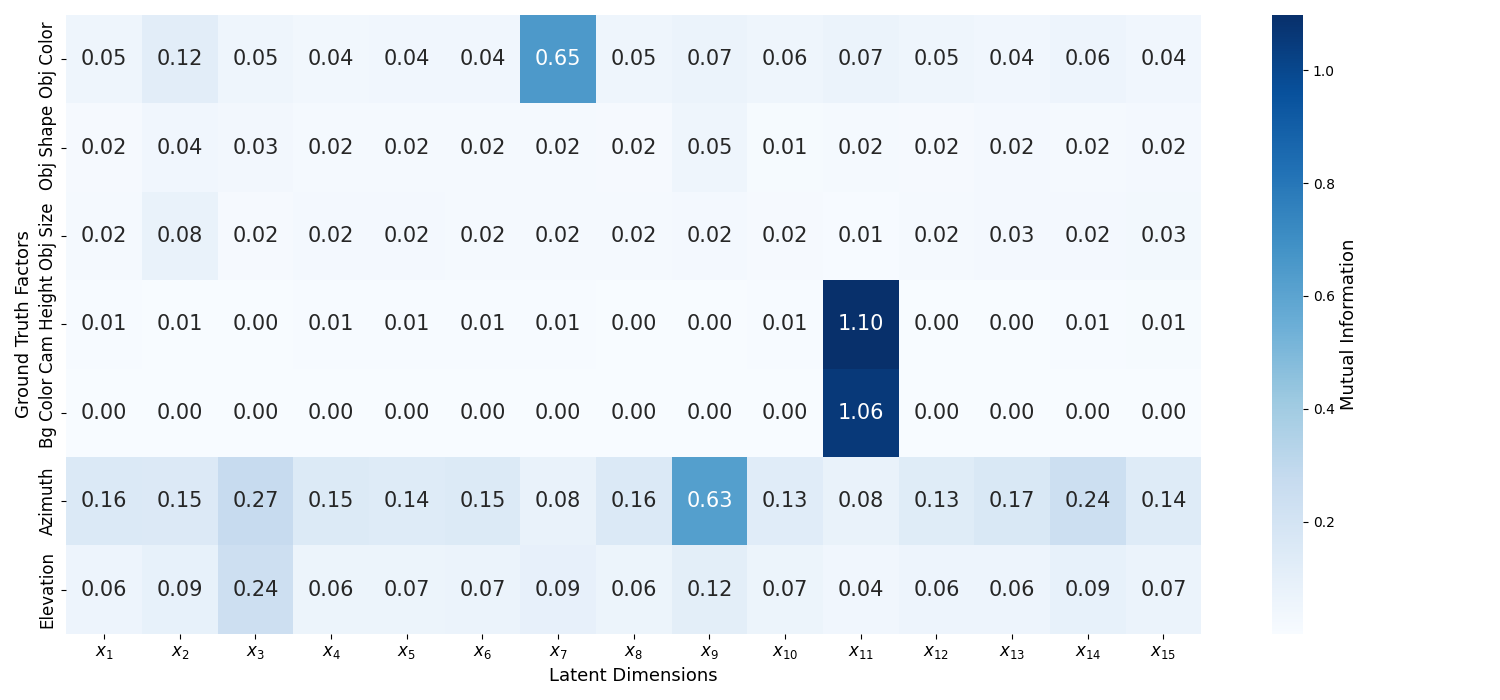}\\[-1mm]
  \small MPI3D-real: $\beta=8,\ \lambda=32$
\end{minipage}

\caption{\textbf{Mutual-information (MI) heatmaps (seed 0).}
Heatmaps show MI between latent coordinates and ground-truth factors under a
low-regularization setting ($\beta=1,\lambda=0$) and a higher-regularization
information-preserving setting ($\beta=8,\lambda=32$). The latter typically
exhibits more concentrated latent-factor associations (reduced redundancy),
consistent with improved disentanglement metrics reported in the main text.}
\label{fig:appendix_mi_all}
\end{figure*}

\end{document}